\documentclass{article}

\usepackage{hyperref}
\usepackage{url}
\usepackage{microtype}
\usepackage{graphicx}
\usepackage{subcaption}
\usepackage{tabularx}
\usepackage{multicol, multirow}
\usepackage{makecell}
\usepackage{verbatim}
\usepackage{xcolor}
\usepackage{amssymb} 
\usepackage{pifont}  
\usepackage[table]{xcolor}
\usepackage{enumitem}
\usepackage{array}
\usepackage{caption}
\usepackage{adjustbox}
\usepackage{booktabs} 
\usepackage[normalem]{ulem}
\useunder{\uline}{\ul}{}

\newcolumntype{Y}{>{\centering\arraybackslash}X}

\usepackage{hyperref}

\usepackage[preprint]{icml2026}

\usepackage{amsmath}
\usepackage{amssymb}
\usepackage{mathtools}
\usepackage{amsthm}

\usepackage{amsmath,amsfonts,bm, amsthm}

\newcommand{\ourmethod}{\textsc{CSMC}}
\newcommand{\methodfullname}{Clean-Sample Markov Chain Sampler}

\newcommand{\newterm}[1]{{\bf #1}}

\newcommand{\CC}{\cellcolor{gray!12}}

\newcommand{\cat}[2]{\text{Cat}\left( #1 ; #2\right)}

\newcommand{\cmark}{\textcolor{black}{\checkmark}} 
\newcommand{\xmark}{\textcolor{black}{\ding{55}}}   

\def\eqref#1{equation~\ref{#1}}

\def\1{\bm{1}}

\def\mQ{{\bm{Q}}}
\def\mR{{\bm{R}}}

\DeclareMathAlphabet{\mathsfit}{\encodingdefault}{\sfdefault}{m}{sl}
\SetMathAlphabet{\mathsfit}{bold}{\encodingdefault}{\sfdefault}{bx}{n}

\def\gU{{\mathcal{U}}}

\def\gX{{\mathcal{X}}}

\newcommand{\E}{\mathbb{E}}

\newcommand{\R}{\mathbb{R}}

\newcommand{\KL}{D_{\mathrm{KL}}}

\usepackage[capitalize,noabbrev]{cleveref}

\theoremstyle{plain}
\newtheorem{theorem}{Theorem}[section]

\theoremstyle{definition}

\theoremstyle{remark}

\usepackage[textsize=tiny]{todonotes}

\icmltitlerunning{Clean-Sample Markov Chain Sampler}

\begin{document}

\twocolumn[
  \icmltitle{Reward-Guided Discrete Diffusion via Clean-Sample Markov Chain for\\Molecule and Biological Sequence Design}



  \icmlsetsymbol{equal}{*}

  \begin{icmlauthorlist}
    \icmlauthor{Prin Phunyaphibarn}{kaist}
    \icmlauthor{Minhyuk Sung}{kaist}
  \end{icmlauthorlist}

  \icmlaffiliation{kaist}{KAIST, Daejeon, South Korea}

  \icmlcorrespondingauthor{Prin Phunyaphibarn}{prin10517@kaist.ac.kr}
  \icmlcorrespondingauthor{Minhyuk Sung}{mhsung@kaist.ac.kr}

  \icmlkeywords{Machine Learning, ICML}

  \vskip 0.3in
]



\printAffiliationsAndNotice{}  

\begin{abstract}
Discrete diffusion models have recently emerged as a powerful class of generative models for chemistry and biology data. In these fields, the goal is to generate various samples with high rewards (e.g., drug-likeness in molecules), making reward-based guidance crucial. Most existing methods are based on guiding the diffusion model using intermediate rewards but tend to underperform since intermediate rewards are noisy due to the non-smooth nature of reward functions used in scientific domains. To address this, we propose Clean-Sample Markov Chain (\ourmethod) Sampler, a method that performs effective test-time reward-guided sampling for discrete diffusion models, enabling local search without relying on intermediate rewards. \ourmethod{} constructs a Markov chain of clean samples using the Metropolis-Hastings algorithm such that its stationary distribution is the target distribution. We design a proposal distribution by sequentially applying the forward and backward diffusion processes, making the acceptance probability tractable. Experiments on molecule and biological sequence generation with various reward functions demonstrate that our method consistently outperforms prior approaches that rely on intermediate rewards.
\end{abstract}
\vspace{-\baselineskip}
\section{Introduction}
\label{sec:intro}
\vspace{-0.5\baselineskip}
Discrete diffusion models have recently emerged as a powerful generative framework for discrete data, showing particular promise in chemistry and biology for generating complex structures such as molecules and DNA sequences. 
Unlike autoregressive models that assume a canonical left-to-right ordering on the data, discrete diffusion models are more naturally suited for scientific domains whose data (e.g., molecules or DNA sequences) lack a natural fixed ordering. For instance, the widely used SMILES~\citep{weininger1988smiles} representation for molecules is based on heuristic rules such as depth-first search which does not use a unified fixed ordering~\citep{leegenmol}.

In many applications in chemistry and biology, there are well-defined notions of \emph{quality} for the data; for example, drug-likeness~\citep{bickerton2012qed} for molecular structures or enhancer activity~\citep{taskiran2024cell, wangdrakes} for DNA sequences. Thus, generative models must not only produce \emph{natural} samples that resemble the training data but also achieve high-quality scores according to such domain-specific criteria.

In the emerging regime of test-time scaling, quality considerations are incorporated by defining a reward function and optimizing it during inference through reward-guided sampling. The simplest strategy is Best-of-N sampling, where $N$ samples are generated, and the one with the highest reward is selected. However, this brute-force method is inefficient since it does not perform any structured guidance. Recent works~\citep{kimtestsmc, wu2023tds, yu2023freedom, bansal2023universal, chungdps} have instead proposed guidance using \emph{process rewards} or \emph{intermediate rewards}, which are computable at intermediate steps of generation. 

While methods involving intermediate rewards are technically applicable to discrete diffusion models, they pose particular challenges for many types of chemistry and biology data as reward functions in these domains are often non-smooth, meaning that small perturbations in the data can cause large changes in the reward. For example, in molecular structures, modifying even a single element in the string representation can render the entire molecule invalid, collapsing the reward to zero, as shown in Fig.~\ref{fig:method} (left). As a result, relying on intermediate rewards does not provide an effective local search strategy in these cases.

The key question that arises here is how to enable local search in reward-guided generation without relying on intermediate rewards. We introduce \textbf{Clean-Sample Markov Chain (\ourmethod) Sampler} which performs iterative search over clean data samples using the Metropolis–Hastings (MH) algorithm. To propose a new clean sample from an existing one, we use forward–backward combinations---applying the forward process to corrupt clean data followed by running the reverse process to obtain a new sample. While the acceptance probability for the MH algorithm is intractable due to intractable clean sample probabilities, we show that the forward-backward proposal distribution makes the acceptance probability tractable, enabling efficient sampling.

We validate \ourmethod{} on molecule and biological sequence generation across four different reward functions. \ourmethod{} achieves the highest reward in all settings, even for SMILES string generation where other methods fail due to inaccurate intermediate rewards.
\begin{figure*}[t]
    \centering
    \includegraphics[width=\linewidth]{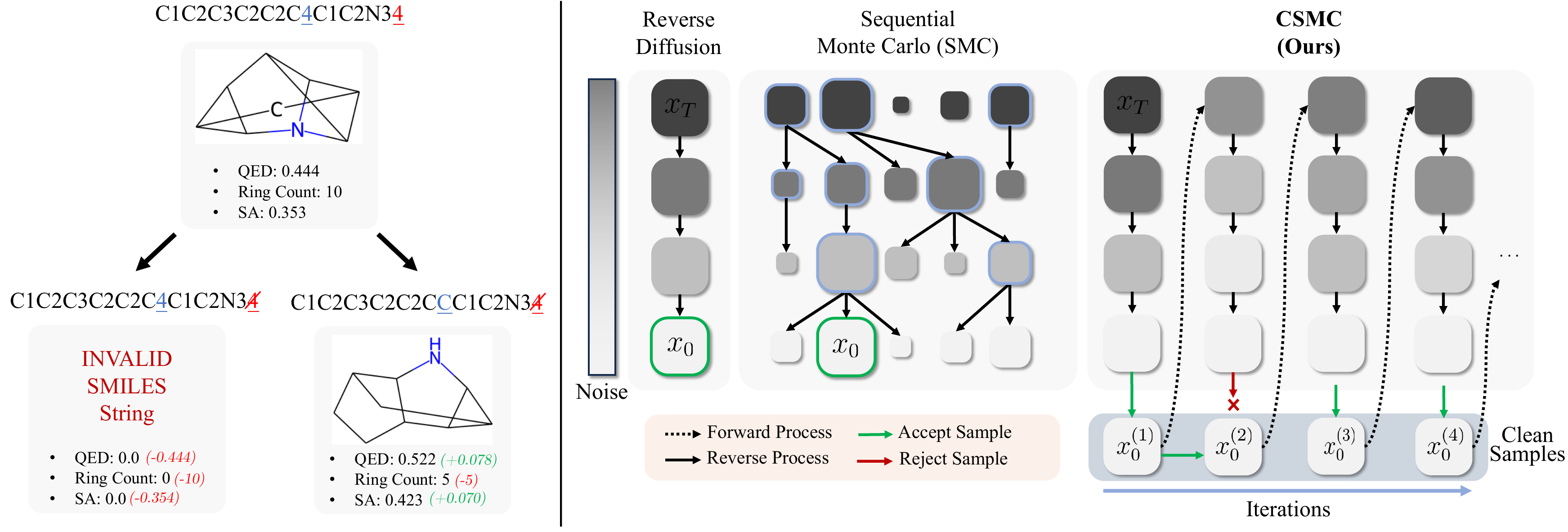}
    \vspace{-\baselineskip}
    \caption{\textbf{Left:} In scientific applications, the rewards defined on discrete spaces are highly sensitive to small perturbations. A one-character change to a SMILE string can result in an invalid string with zero reward. Properties such as QED, ring count, and synthetic accessibility (SA) can also vary significantly even when changing only one or two tokens. \textbf{Right:} Reverse diffusion and typical inference-time scaling methods~\citep{kimtestsmc,li2024svdd} such as SMC~\citep{kimtestsmc} rely on guiding samples through noise levels by constructing a Markov chain beginning with pure noise and ending with clean samples. Our \ourmethod{} constructs a Markov chain consisting of only clean samples by successively applying the forward and reverse processes sequentially at each step. This formulation bypasses the need for intermediate rewards by evaluating the reward directly on clean samples while leveraging information from past samples for guidance.}
    \vspace{-\baselineskip}
    \label{fig:method}
\end{figure*}
\section{Related Work}
\label{sec:related-work}
\vspace{-0.5\baselineskip}
In continuous diffusion models, reward-guided sampling is conventionally performed using gradient-based methods~\citep{dhariwal2021cg, ho2022cfg, chungdps, song2023pigdm, rozet2024momentmatching, bansal2023universal, yoon2025psi, kimtestsmc, wu2023tds} which offer strong guidance towards high-reward regions. However, gradient-based approaches cannot be applied to discrete diffusion models as gradients are ill-defined in discrete spaces and it is not theoretically valid to add a continuous gradient to a discrete objective.

\begin{table}
{
\small
\begin{tabularx}{\linewidth}{Y | >{\centering\arraybackslash}m{0.065\textwidth} >{\centering\arraybackslash}m{0.07\textwidth} >{\centering\arraybackslash}m{0.06\textwidth} >{\centering\arraybackslash}m{0.1\textwidth}}
\toprule
& \makecell{Uniform} & \makecell{Masked} & \makecell{Clean\\Reward} & \makecell{Trajectory\\Guidance} \\
\midrule
BoN & \cmark & \cmark & \cmark & \xmark \\
SMC & \cmark & \cmark & \xmark & \cmark \\
SVDD & \cmark & \cmark & \xmark & \cmark \\
SGDD & \cmark & \xmark & \xmark & \cmark \\
\rowcolor{gray!12} \textbf{\ourmethod} & \cmark & \cmark & \cmark & \cmark \\
\bottomrule
\end{tabularx}
}
\caption{Comparison of discrete diffusion inference-time scaling methods for reward-guided sampling. Our~\ourmethod{} applies to all discrete diffusion frameworks and leverages the clean reward while guiding the sampling trajectory.}
\vspace{-\baselineskip}
\label{tab:method-comparison}
\end{table}

\vspace{-0.5\baselineskip}
\paragraph{Training-Free Reward-Guided Sampling for Discrete Diffusion.} 
Recently, inference-time scaling methods for discrete diffusion models have been proposed to tackle reward-guided sampling. A comparison of these methods and our method is shown in Tab.~\ref{tab:method-comparison}.

The simplest method is Best-of-N (BoN) sampling~\citep{stiennon2020bon}, which generates $N$ samples independently and selects the one with the highest reward. Due to its simplicity, BoN is applicable to all types of discrete diffusion models. However, BoN does not guide the denoising trajectory using the reward, resulting in inefficient search, especially when high-reward samples lie in low-density regions unlikely to be sampled by the model. 

On the other hand, particle-based methods~\citep{ma2025scaling, kim2025rbf, singhalfksteering, li2024svdd}, such as SMC~\citep{doucet2001smcintro, naesseth2019elementsofsmc, moral2004fksmc} and SVDD~\citep{li2024svdd}, can be applied to discrete diffusion and incorporate reward signals \emph{during} the denoising process through a reward-based resampling step. 
SMC~\citep{doucet2001smcintro, naesseth2019elementsofsmc, moral2004fksmc}, takes $N$ particles (samples) at each step and samples a subset of the particles to keep while throwing the rest away. By adjusting the probability of keeping each particle based on its expected reward, SMC encourages local exploration around potentially promising samples. SVDD~\citep{li2024svdd} exploits local search around high-reward samples by generating multiple candidate samples at each timestep and retaining only one sample by sampling the candidates with probability proportional to the reward. A key challenge with these methods is that they require computing \newterm{intermediate rewards} on noisy samples rather than on fully denoised samples. To address this,  these approaches exploit the diffusion model’s ability to predict an approximation of the clean sample $x_0$ given a noisy sample $x_t$, and compute the rewards on this $x_0$ prediction instead. This enables reward-guidance \emph{during} the denoising process, but also inherently assumes that the predicted $x_0$ is a good approximation of the true clean sample since exploration will be performed locally around samples with high intermediate rewards, an assumption which does not hold in many scientific applications. 

Recently, \citet{chu2025sgdd} proposed SGDD which uses the split Gibbs sampler~\citep{vono2019splitgibbs} by alternating between denoising steps and running MCMC to optimize intermediate noisy samples. However, SGDD applies \emph{only} to uniform diffusion models where intermediate noisy samples can be treated as clean samples. The characteristics of uniform discrete diffusion is discussed in more detail in Sec.~\ref{sec:method}. Furthermore, SGDD still relies on computing the reward on the intermediate noisy samples during the MCMC optimization step.

In this work, we bypass the need for intermediate rewards by using the Metropolis-Hastings algorithm~\citep{robert2009mh} to construct a Markov chain of \emph{clean} samples that converges to the desired target distribution. Our method is applicable to both uniform and masked discrete diffusion models, and only requires the computation of \newterm{clean rewards} on clean samples, which can then be used to efficiently guide the Markov chain (Tab.~\ref{tab:method-comparison}).

\vspace{-\baselineskip}
\paragraph{Training-Based Reward-Guided Sampling for Discrete Diffusion.} Unlike continuous diffusion, discrete diffusion models cannot leverage gradient signals due to the non-differentiable nature of discrete spaces. As such, only gradient-free approaches can be applied to discrete diffusion models. Guidance methods~\citep{nisonoffdiscreteguidance, schiff2024udlm} have also been developed for diffusion models but require training a classifier on noisy data. Due to its non-differentiability, RLFT approaches are often used in discrete diffusion~\citep{rectorddpp, wangdrakes, zekri2025sepo} but are difficult to train since non-differentiability forces fine-tuning to be done using policy gradients based on highly complex reward landscapes~\citep{uehara2025inference}. Crucially, modifying the model weights also runs the risk of deviating from the pretrained prior, affecting the naturalness of the generated samples and further complicating training.
\section{Background}
\label{sec:background}
\vspace{-0.5\baselineskip}
Diffusion models~\citep{sohl2015deep, ho2020ddpm, song2020score} learn to reverse a forward Markov process. Although originally designed for continuous spaces, diffusion models have also been successfully applied to discrete state spaces~\citep{austin2021d3pm, campbell2022ctmc, lou2024sedd, sahoo2024mdlm, schiff2024udlm, shi2024md4}.

\subsection{Discrete and Continuous-Time Discrete Diffusion}
\vspace{-0.5\baselineskip}
\paragraph{Discrete-Time Discrete Diffusion.}
These discrete diffusion models are characterized by forward transition matrices $\mQ_t$. Let $\overline{\mQ}_t = \mQ_1 \mQ_2 \cdots \mQ_t$. These transition matrices define the forward process:
\begin{align*}
    p_{t|t-1}(x_t | x_{t-1}) &= \cat{x_t}{p = x_{t-1}\mQ_t}, \\
    p_t(x_t | x_0) &= \cat{x_t}{p = x_0 \overline{\mQ}_t},
\end{align*}
where $\text{Cat}(\cdot;p)$ denotes the categorical distribution with probabilities given by $p$. 

While the reverse process probability $p(x_{t-1}|x_t)$ is not directly tractable, by additionally conditioning on $x_0$, the reverse process can be derived in closed form as
\begin{align*}
    p(x_{t-1} | x_t, x_0) = \cat{x_{t-1}}{p=\frac{x_t \mQ_t^\top \odot x_0 \overline{\mQ}_{t-1}}{x_0 \overline{\mQ}_t x_t^\top}}.
\end{align*}
A neural network $p_\theta(x_0|x_t) \approx p(x_0|x_t)$ is learned and denoising is performed by the following parameterization:
\begin{align*}
    p_\theta(x_{t-1} | x_t) \propto \sum_{x_0} q(x_{t-1}, x_t | x_0) p_\theta(x_0 | x_t).
\end{align*}
Notably, the learned neural network $p_\theta(x_0|x_t)$ predicts a distribution over $x_0$ given $x_t$ which enables sampling $x_0$-predictions $\hat{x}_0(x_t) \sim p_\theta(x_0 | x_t)$.

\paragraph{Continuous-Time Discrete Diffusion.}
\vspace{-0.5\baselineskip}
\citet{campbell2022ctmc} proposed a continuous-time framework based on Continuous-Time Markov Chains (CTMC) where state transitions can occur at any time. Instead of defining transition matrices, CTMC-based discrete diffusion defines a forward transition \emph{rate} matrix $\mR_t$ which defines the infinitesimal transition probability between two timesteps:
\begin{align*}
    p(x_t | \tilde{x}_{t-\Delta t}) = \delta_{x_t,\tilde{x}_{t-\Delta t}} + \mR_t(\tilde{x}_{t-\Delta t}, x_t) \Delta t + o(\Delta t).
\end{align*}

By using a continuous-time framework, advanced sampling strategies such as predictor-corrector methods~\citep{campbell2022ctmc, zhao2024informedpredictor} and planning~\citep{liuthinkbefore, peng2025p2pathplanning} and score-based approaches~\citep{meng2022concrete, sunsddm, lou2024sedd} have been proposed. 

\subsection{Masked and Uniform Discrete Diffusion}
\vspace{-0.5\baselineskip}
Discrete diffusion models allow the user to choose the transition matrix. The two most common choices of transition matrices result in masked diffusion models (MDMs)~\citep{austin2021d3pm, sahoo2024mdlm, shi2024md4} and uniform state models (USMs)~\cite{austin2021d3pm, schiff2024udlm}.

\vspace{-0.5\baselineskip}
\paragraph{Masked Diffusion Models (MDMs).} MDMs define the forward process by progressively replacing tokens with a special mask token. The denoising model learns to recover the original sequence from these masked inputs. Notably, samples at intermediate time steps are not valid samples because of the mask tokens, which are not present in the dataset.

\vspace{-0.5\baselineskip}
\paragraph{Uniform State Models (USMs).} USMs replace each token with a randomly chosen token from the vocabulary as noise increases. This creates a uniform corruption process in which every other possible token substitution is equally likely. Unlike MDMs, samples at intermediate time steps may constitute valid samples. 
\section{\methodfullname}
\label{sec:method}
\vspace{-0.5\baselineskip}
Let the reward function be denoted by $r(\cdot): \gX \to \R$ where $\gX$ is the domain, and $\Delta(\gX)$ denote the set of all probability distributions defined on $\gX$. Given a reward function $r(x)$, our objective is to generate samples with high rewards while maintaining naturalness by leveraging a pretrained generative model $p^\text{pre}(\cdot)$. To accomplish this, we sample from the reward-weighted distribution $p_\beta(x)$:
\begin{align*}
    p_\beta(x) &:= \underset{p \in \Delta(\gX)}{\arg \max} \E_{x \sim p(\cdot)} \left[ r(x) \right] - \beta \KL(p(\cdot) \| p^\text{pre}(\cdot)) \\
    &\propto \exp (r(x) / \beta) p^\text{pre}(x),
\end{align*}
where $\beta$ is a hyperparameter controlling the ``naturalness'' of the samples through KL-regularization with the pretrained model.

Most previous methods such as particle-based methods perform guidance by utilizing intermediate rewards $r(\hat{x}_0(x_t))$ computed from the $x_0$-prediction $\hat{x}_0(x_t)$ at $x_t$ as an approximation to the clean sample. However, intermediate rewards are often inaccurate and noisy in many scientific applications due to the non-smooth reward functions. In scientific applications, even a slight perturbation can significantly impact the reward. For instance, when generating molecules based on the SMILES~\citep{weininger1988smiles} representation, modifying a single token on a high-reward molecule can result in an invalid molecule with zero reward or a molecule with very different properties, as shown in Fig.~\ref{fig:method} (left). This is in stark contrast to rewards defined on continuous space where a slight perturbation smoothly affects the reward (small perturbations to some pixel values of an image leave the semantics of the image unchanged). Due to the sensitivity of the reward, the $x_0$ prediction must be perfectly accurate, otherwise the intermediate rewards will be uninformative. A sample whose intermediate reward is zero may be prematurely removed by a particle-based method even though changing a single token may yield a high-reward sample (Fig.~\ref{fig:method}, left).

In these cases, it is more beneficial to leverage only clean reward $r(x_0)$ computed on a clean sample $x_0$ rather than on an approximation $\hat{x}_0(x_t)$. One such method leveraging clean rewards is Best-of-N (BoN) sampling~\citep{stiennon2020bon}. However, BoN does not provide guidance during the denoising process, resulting in inefficient exploration. Its effectiveness is therefore limited to what the pretrained model can already generate: if high-reward samples lie in low-density regions of the model’s distribution, they are unlikely to ever be produced, even when many samples are drawn. A natural question arises: \textbf{How can we leverage clean rewards while using information from past samples for guidance?}

Our proposed method, {\ul \bf C}lean-{\ul \bf S}ample {\ul \bf M}arkov {\ul \bf C}hain \ \textbf{(\ourmethod)} Sampler, answers this question by using the Metropolis-Hastings (MH) algorithm to construct a Markov chain of clean samples that converges to the reward-weighted distribution $p_\beta(x_0)$. By using only clean samples, \ourmethod \ leverages accurate clean rewards. By using the MH algorithm to iteratively refine the samples, the clean rewards of past samples can be used for guidance.

\subsection{Bypassing the Intermediate Rewards}
\vspace{-0.5\baselineskip}
Previous methods require intermediate rewards because the denoising trajectory of diffusion models output clean samples only at the very end. A natural solution is to instead explore the clean data space by constructing a chain of \emph{clean} samples $\{x_0^{(i)}\}_i$, which converges to the target distribution $p_\beta(x_0)$. At each iteration, rewards can then be computed directly on clean samples, yielding clean rewards $r(x_0)$ which can then be used to guide the chain towards the target distribution. To accomplish this, we propose to use the Metropolis-Hastings (MH) algorithm.

Suppose we want to sample from the (potentially unnormalized) target distribution $p_\beta(x_0)$. We first define a proposal distribution $q(x'_0|x_0)$ to generate the next candidate sample $x'_0 \sim q(x'_0|x_0)$ given the current sample $x_0$. After generating the proposal candidate, we decide whether or not to accept this proposal with probability $A(x_0'|x_0)$ defined as
\begin{align*}
    A(x'_0|x_0) = \min \left( 1, \alpha \right), \quad \alpha=\frac{p_\beta(x'_0)q(x_0|x'_0)}{p_\beta(x_0)q(x'_0|x_0)}.
\end{align*}
With a slight abuse of notation, we use $A(x'_0|x_0)$ interchangeably with $A$ when the choice of $x_0'$ and $x_0$ are not important. The MH algorithm can be shown to construct a Markov chain whose stationary distribution is the target distribution. For more details and proof of convergence on the MH algorithm, please refer to App.~\ref{app:mcmc-proofs}. To generate samples from the target reward-weighted distribution $p_\beta(x_0) \propto \exp(r(x_0)/\beta)p(x_0)$, the acceptance probability is as follows:
\begin{equation}
    A (x_0'|x_0) = \min(1,\alpha),
\end{equation}
\begin{equation}
    \alpha = \exp \left( \frac{r(x_0') - r(x_0)}{\beta} \right) \frac{p(x_0') q(x_0|x_0')}{p(x_0) q(x_0'|x_0)}.
\end{equation}
The difficulty in applying the MH algorithm to diffusion models is the intractibility of the acceptance probability. Since $p(x_0)$ is intractable, the acceptance probability cannot be efficiently computed. 

\subsection{Forward-Backward Proposal Distribution}
\vspace{-0.5\baselineskip}
To bypass the computation of $p(x_0)$, we define the proposal distribution in such a way that the proposal probabilities $q(x_0|x_0')$ and $q(x_0'|x_0)$ cancel out $p(x_0)$ and $p(x_0')$. We define the proposal distribution $q(x_0'|x_0)$ using a forward-backward diffusion process as follows. Starting from the current clean sample $x_0$, we choose a random time $t~\sim \gU(t_\text{lo}, t_\text{hi})$ (where $t_\textit{lo}$ and $t_\textit{hi}$ are user-defined parameters) and apply the forward process to obtain a noisy auxiliary sample $x_{t} \sim p_{t}(\cdot | x_0)$. Then, we run the reverse process to obtain a new clean sample $x_0' \sim p(\cdot | x_t)$ This proposal distribution is defined by
\begin{align}
    \label{eq:proposal}
    q(x_0'|x_0) := p_t(x_t|x_0) p(x_0'|x_t),
\end{align}
where $t \sim \gU(t_\text{lo}, t_\text{hi})$ and $x_t \sim p(\cdot | x_0)$ are resampled at each step. 

However, we find that running a one-step reverse process results in an inaccurate $x_0'$ in practice. Instead, we run an $M$-step reverse process on $x_t$ to obtain $(x_{t_M}, x_{t_{M-1}}, \dots, x_1, x_0') \sim \prod_{i=1}^M p(x_{t_{i-1}} | x_{t_{i}})$ where $t=t_{M} > \cdots > t_{0}=0$. We then discard the $x_{t_{i}}$ and retain only $x_0'$. This proposal distribution can be interpreted as exploring locally around $x_0$.

Assuming that the marginal distributions of the learned reverse process matches that of the forward process, we have
\begin{align*}
    \alpha
    &= \exp \left( \frac{r(x_0') - r(x_0)}{\beta} \right) \frac{p(x_0') p(x_t|x_0') p(x_0|x_t)}{p(x_0) p(x_t|x_0) p(x_0'|x_t)} \\ 
    &= \exp \left( \frac{r(x_0') - r(x_0)}{\beta} \right) \frac{p(x_0') p(x_t|x_0') p(x_0|x_t, x_0')}{p(x_0) p(x_t|x_0) p(x_0'|x_t, x_0)} \\
    &= \exp \left( \frac{r(x_0') - r(x_0)}{\beta} \right).
\end{align*}

Thus, the acceptance probability simplifies to
\begin{equation}
\label{eq:mh-accept-probs}
    A(x_0'|x_0)=\min(1,\alpha), \quad \alpha = \exp \left( \frac{r(x_0') - r(x_0)}{\beta} \right),
\end{equation}
which can now be efficiently computed. The assumption that the marginals of the reverse and forward process match is mild and is enforced by the training objective of discrete diffusion models~\citep{austin2021d3pm}. The detailed derivation of the acceptance probability can be found in App.~\ref{app:accept-probs}. 

\ourmethod{} begins by running the full reverse process to obtain a clean initial sample $x_0^{(1)}$. At each step, we generate a candidate sample using our forward-backward diffusion process and choose to accept or reject the candidate based on the MH acceptance probability given in Eq.~\ref{eq:mh-accept-probs}, as shown in Fig.~\ref{fig:method} (right). This results in a chain of clean samples whose stationary distribution is the reward-weighted distribution. While MH is guaranteed to construct a Markov chain whose stationary distribution is $p_\beta(x_0)$, initial samples may not come from the stationary distribution. Thus, we discard the first half of the chain and take as many samples as desired from the latter half at equal intervals. The practice of ``throwing away'' the initial samples of a Markov chain is known as \newterm{burn-in}~\citep{robert2009mh, murphy2023probabilistic}. The full algorithm for \ourmethod \ is shown in Alg.~\ref{alg:mhdiff}. 

\begin{figure}
    \centering
    \input{Algorithm/method-algo}
\end{figure}
\vspace{-\baselineskip}
\section{Experiments}
\vspace{-0.5\baselineskip}
We conduct experiments on molecule generation with QM9~\citep{ramakrishnan2014qm9} and ZINC250K~\citep{irwin2012zinc}, and biological sequence design using the MPRA~\citep{gosai2023mpra} dataset. 
We pretrain four discrete diffusion models on each of the dataset:
\begin{itemize}[leftmargin=*]
    \item \textbf{MDM~\citep{sahoo2024mdlm}:} A masked discrete diffusion model using discrete-time ancestral sampling.
    \item \textbf{USM~\citep{schiff2024udlm}:} A uniform state discrete diffusion model using discrete-time ancestral sampling.
    \item \textbf{SEDD-M~\citep{lou2024sedd}}: A score-based CTMC masked discrete diffusion model.
    \item \textbf{SEDD-U~\citep{lou2024sedd}:} A score-based CTMC uniform discrete diffusion model.
\end{itemize}

We compare the following training-free reward-guided sampling methods for discrete diffusion:

\begin{itemize}[leftmargin=*]
    \item \textbf{Pretrained models:} Samples are generated using the pretrained model.
    \item \textbf{Best-of-N (BoN):} $N$ samples are generated and the one with the highest reward is selected.
    \item \textbf{SMC~\citep{doucet2001smcintro}:} The representative derivative-free particle-based sampling method which approximates the target distribution by updating and resampling a set of $N$ particles at each step using the intermediate reward.
    \item \textbf{SVDD~\citep{li2024svdd}:} $N$ candidate samples are generated at each step and a sample is selected by randomly choosing a sample with probability proportional to its intermediate reward.
    \item \textbf{SGDD~\citep{chu2025sgdd}:} A posterior sampling method specifically tailored for uniform CTMC discrete diffusion models which uses split Gibbs sampling~\citep{vono2019splitgibbs}. Since this method is based on the forward process of uniform CTMC discrete diffusion models, we only evaluate SGDD on SEDD-U and not on the discrete-time model USM.
    \item \textbf{\ourmethod (Ours):} We run Alg.~\ref{alg:mhdiff} to construct a Markov chain converging to $p_\beta(x_0)$ and draw multiple samples from the chain.
    \item \textbf{\ourmethod-B (Ours):} \ourmethod \ with batching by running $B$ chains in parallel while keeping the total NFE fixed by reducing the number of iterations in each Markov chain, resulting in faster sampling times. 
\end{itemize}
\vspace{-0.5\baselineskip}
We match the total diffusion model NFE for each method. More experiment details can be found in App.~\ref{app:experiment-details}.

\noindent
\begin{figure}[t]
\hfill
\begin{minipage}{0.5\textwidth}
\vspace{0pt}
    \centering
    \includegraphics[width=\linewidth]{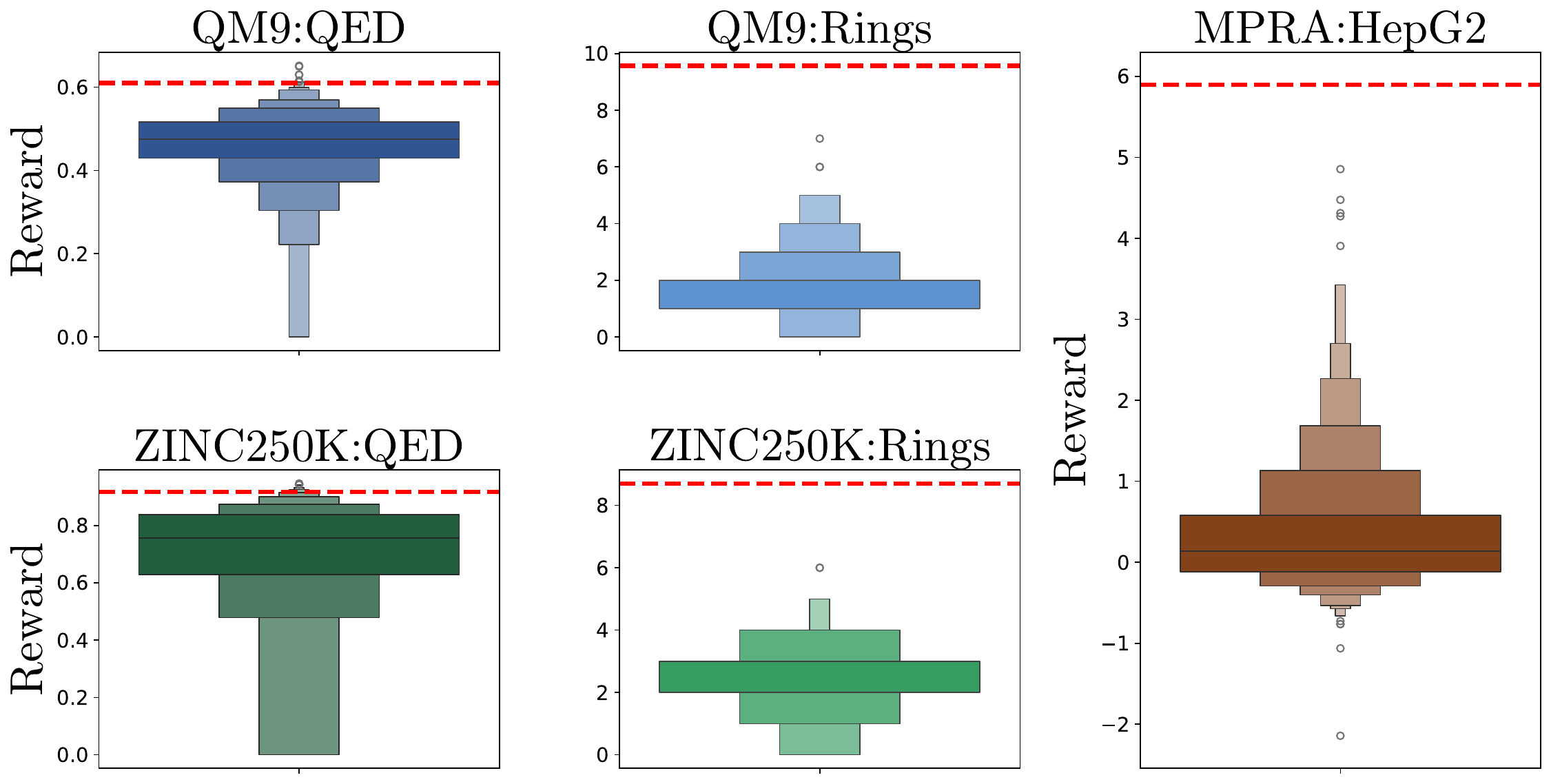}
    \vspace{-\baselineskip}
    \captionof{figure}{\textbf{Reward distributions of the pretrained USM.} The red dotted line represents the average reward achieved by~\ourmethod{}. For ring count and HepG2, the pretrained model reward distribution has low density at higher rewards, resulting in degraded performance for BoN sampling.}
    \label{fig:reward-distribution}
\end{minipage}
\vspace{-\baselineskip}
\end{figure}

\vspace{-1.5\baselineskip}
\paragraph{Molecule Generation.}
We test our method on two molecule datasets: QM9~\citep{ramakrishnan2014qm9} and ZINC250K~\citep{irwin2012zinc}. QM9 is a dataset consisting of $\sim$133,000 small organic molecules, and ZINC250K is a dataset of 250,000 commercially available compounds. The molecules in both datasets are represented as SMILES strings~\citep{weininger1988smiles}. For our reward functions, we use \textbf{QED}~\citep{bickerton2012qed}, ring count \textbf{(Rings)}, and synthetic accessibility \textbf{(SA)}~\citep{ertl2009sa}. QED measures the drug-likeness of a compound based on eight widely used molecular properties (number of hydrogen bond donors/acceptors, molecular polar surface area, number of aromatic rings, etc.). Higher QED values indicate higher drug-likeness. Ring count measures the number of rings in the symmetrized smallest set of smallest rings (SSSR). Finally, SA measures the ease of synthesis of drug-like molecules through a combination of known common structural features in known synthesized molecules, and a penalty based on complex structural features of the molecule. SA takes on a value bewteen 1 and 10 where higher values indicate that the molecule is harder to synthesize. In this work, SA is converted to a reward function by applying the renormalization $(10-SA)/9$ so that higher values indicate better performance. For all rewards, higher is better.

\begin{table*}[h]
\scriptsize
\begin{tabularx}{\linewidth}{>{\centering\arraybackslash}m{0.06\textwidth}|>{\centering\arraybackslash}m{0.08\textwidth}|*{3}{Y}|*{3}{Y}|Y}
\toprule
\multicolumn{2}{c|}{} & \multicolumn{3}{|c|}{\textbf{QM9}} & \multicolumn{3}{|c|}{\textbf{ZINC250K}} & \multicolumn{1}{|c}{\textbf{MPRA}} \\
\multicolumn{2}{c|}{} & \multicolumn{1}{|c}{QED} & \multicolumn{1}{c}{Rings} & \multicolumn{1}{c|}{SA} & \multicolumn{1}{|c}{QED} & \multicolumn{1}{c}{Rings} & \multicolumn{1}{c|}{SA} & \multicolumn{1}{|c}{HepG2} \\ 
\midrule
\midrule
\multirow{6}{*}{MDM}          & Pretrained 
                              & 0.461{\tiny±0.138}                 & 2.755{\tiny±3.432}                          & 0.558{\tiny±.0227}                  
                              & 0.663{\tiny±0.323}                 & 2.020{\tiny±2.488}                          & 0.742{\tiny±0.323}                  
                              & 0.442{\tiny±1.767}                              \\
                              & BoN        
                              & 0.580{\tiny±0.073}             & 5.479{\tiny±1.286}                          & 0.818{\tiny±0.129}            
                              & 0.854{\tiny±0.121}                   & 3.854{\tiny±1.923}                    & 0.862{\tiny±0.121}            
                              & 1.842{\tiny±2.093}                              \\
                              & SMC        
                              & 0.512{\tiny±0.144}                   & 2.803{\tiny±1.713}                          & 0.759{\tiny±0.271}
                              & 0.637{\tiny±0.310}                        & 2.128{\tiny±2.594}                          & 0.769{\tiny±0.310}
                              & 3.087{\tiny±2.975}                        \\
                              & SVDD       
                              & 0.567{\tiny±0.117}                   & 2.849{\tiny±1.462}                          & 0.836{\tiny±0.174}
                              & 0.776{\tiny±0.255}                        & 2.490{\tiny±1.848}                          & 0.754{\tiny±0.255}
                              & 2.319{\tiny±2.654}                              \\
                              & \CC \textbf{\ourmethod}         
                              & \CC \textbf{0.610}{\tiny±0.085}          & \CC \textbf{10.00}{\tiny±0.933}                & \CC \textbf{0.913}{\tiny±0.077}
                              & \CC {\ul 0.910}{\tiny±0.060}               & \CC \textbf{8.091}{\tiny±2.328}                 & \CC \textbf{0.905}{\tiny±0.060}
                              & \CC \textbf{5.259}{\tiny±0.849}                     \\
                              & \CC \textbf{\ourmethod-B}         
                              & \CC \textbf{0.610}{\tiny±0.083}          & \CC {\ul 8.678}{\tiny±2.531}                & \CC {\ul 0.911}{\tiny±0.095}         
                              & \CC \textbf{0.914}{\tiny±0.049}          & \CC {\ul 6.032}{\tiny±1.918}                & \CC {\ul 0.903}{\tiny±0.073}          
                              & \CC {\ul 5.127}{\tiny±1.428}                     \\
\midrule
\multirow{6}{*}{USM}          & Pretrained 
                              & 0.461{\tiny±0.152}                   & 2.032{\tiny±1.341}                          & 0.620{\tiny±0.223}
                              & 0.739{\tiny±0.270}                        & 2.598{\tiny±1.883}                          & 0.768{\tiny±0.189}
                              & 0.351{\tiny±1.541}                              \\
                              & BoN        
                              & 0.600{\tiny±0.054}             & 5.044{\tiny±1.267}                    & 0.839{\tiny±0.097}            
                              & 0.901{\tiny±0.065}                  & 4.184{\tiny±1.307}                    & 0.889{\tiny±0.048}            
                              & 1.753{\tiny±1.965}                        \\
                              & SMC        
                              & 0.485{\tiny±0.170}                   & 1.957{\tiny±1.212}                          & 0.660{\tiny±0.203}
                              & 0.750{\tiny±0.270}                        & 2.633{\tiny±1.828}                          & 0.785{\tiny±0.184}
                              & 0.743{\tiny±2.048}                              \\
                              & SVDD       
                              & 0.454{\tiny±0.157}                   & 2.135{\tiny±1.460}                          & 0.624{\tiny±0.217}
                              & 0.750{\tiny±0.260}                        & 2.562{\tiny±1.861}                          & 0.773{\tiny±0.195}
                              & 0.695{\tiny±1.920}                              \\
                              & \CC \textbf{\ourmethod}         
                              & \CC \textbf{0.610}{\tiny±0.085}          & \CC \textbf{9.570}{\tiny±0.791}                & \CC \textbf{0.908}{\tiny±.063}        
                              & \CC \textbf{0.917}{\tiny±0.050}               & \CC \textbf{8.703}{\tiny±3.454}                 & \CC \textbf{0.898}{\tiny±0.060}
                              & \CC \textbf{5.897}{\tiny±1.499}                     \\
                              & \CC \textbf{\ourmethod-B}         
                              & \CC {\ul 0.609}{\tiny±0.081}             & \CC {\ul 9.240}{\tiny±2.401}                 & \CC {\ul 0.915}{\tiny±0..088}        
                              & \CC {\ul 0.910}{\tiny±0.063}         & \CC {\ul 6.310}{\tiny±2.494}                 & \CC {\ul 0.895}{\tiny±0.073}         
                              & \CC {\ul 5.292}{\tiny±1.876}                    \\
\midrule
\multirow{6}{*}{SEDD-M}    & Pretrained 
                           & 0.460{\tiny±0.143}                   & 2.305{\tiny±3.318}                          & 0.588{\tiny±0.240}                  
                           & 0.669{\tiny±0.328}                        & 2.350{\tiny±2.648}                          & 0.731{\tiny±0.243}
                           & 0.373{\tiny±1.631}                              \\
                           & BoN        
                           & 0.582{\tiny±0.069}             & 5.241{\tiny±2.441}                    & {\ul 0.831}{\tiny±0.123}            
                           & {\ul 0.852}{\tiny±0.119}                  & 4.057{\tiny±2.022}                    & 0.857{\tiny±0.088}            
                           & 1.874{\tiny±2.243}                        \\
                           & SMC        
                           & 0.461{\tiny±0.162}         & 2.425{\tiny±3.365}         & 0.599{\tiny±0.263} 
                           & 0.669{\tiny±0.338}              & 2.251{\tiny±2.856}         & 0.745{\tiny±0.224} 
                           & 0.386{\tiny±1.639} \\
                           & SVDD       
                           & 0.450{\tiny±0.159}         & 2.378{\tiny±3.243}         & 0.575{\tiny±0.244} 
                           & 0.666{\tiny±0.326}              & 2.362{\tiny±2.707}         & 0.739{\tiny±0.245} 
                           & 0.456{\tiny±1.824} \\
                           & \CC \textbf{\ourmethod}         
                           & \CC \textbf{0.619}{\tiny±0.096}          & \CC \textbf{9.792}{\tiny±3.402}                 & \CC \textbf{0.894}{\tiny±0.113}         
                           & \CC \textbf{0.875}{\tiny±0.150}               & \CC \textbf{10.026}{\tiny±4.305}                & \CC \textbf{0.885}{\tiny±0.088}
                           & \CC \textbf{7.153}{\tiny±1.439}                     \\
                           & \CC \textbf{\ourmethod-B} 
                           & \CC {\ul 0.594}{\tiny±0.098}         & \CC {\ul 8.977}{\tiny±3.041}      & \CC 0.821{\tiny±0.262}
                           & \CC 0.846{\tiny±0.189}                           & \CC {\ul 5.677}{\tiny±5.179}      & \CC {\ul 0.863}{\tiny±0.162}
                           & \CC {\ul 5.991}{\tiny±1.643}                       \\
\midrule
\multirow{7}{*}{SEDD-U} & Pretrained 
                        & 0.458{\tiny±0.154}                   & 1.654{\tiny±2.355}                          & 0.637{\tiny±0.230}                  
                        & 0.741{\tiny±0.266}                        & 2.524{\tiny±1.753}                          & 0.771{\tiny±0.189}
                        & 0.455{\tiny±1.755}                              \\
                        & BoN        
                        & {\ul 0.583}{\tiny±0.066}             & 3.862{\tiny±1.996}                    & 0.843{\tiny±0.112}            
                        & {\ul 0.904}{\tiny±0.045}                  & {\ul 4.056}{\tiny±1.241}                    & {\ul 0.889}{\tiny±0.049}
                        & 1.729{\tiny±2.216}                              \\
                        & SMC        
                        & 0.546{\tiny±0.113}                   & 2.718{\tiny±2.790}                          & 0.811{\tiny±0.196}                  
                        & 0.850{\tiny±0.174}                        & 2.610{\tiny±1.684}                          & 0.847{\tiny±0.136}
                        & 3.334{\tiny±3.337}                                \\
                        & SVDD       
                        & 0.518{\tiny±0.120}                   & 2.482{\tiny±2.624}                          & 0.779{\tiny±0.234}                  
                        & 0.809{\tiny±0.217}                   & 2.759{\tiny±1.784}                          & 0.817{\tiny±0.148}                  
                        & 3.189{\tiny±3.330}                                \\
                        & SGDD       
                        & 0.536{\tiny±0.121}                   & 2.644{\tiny±2.791}                          & 0.684{\tiny±0.211}                  
                        & 0.844{\tiny±0.152}                   & 2.535{\tiny±1.627}                          & 0.847{\tiny±115}                  
                        & 9.240{\tiny±2.050}                        \\
                        & \CC \textbf{\ourmethod}         
                        & \CC \textbf{0.619}{\tiny±0.080}      & \CC {\bf 7.488}{\tiny±5.376}                & \CC {\ul 0.886}{\tiny±0.075}         
                        & \CC \textbf{0.922}{\tiny±0.073}      & \CC \textbf{5.499}{\tiny±2.614}             & \CC \textbf{0.898}{\tiny±0.074}         
                        & \CC \textbf{10.09}{\tiny±2.637} \\
                        & \CC \textbf{\ourmethod-B} 
                        & \CC 0.572{\tiny±0.068}               & \CC {\ul 7.213}{\tiny±4.325}         & \CC \textbf{0.908}{\tiny±0.120}
                        & \CC 0.894{\tiny±0.112}               & \CC 3.625{\tiny±0.870}               & \CC 0.883{\tiny±0.088}
                        & \CC {\ul 9.350}{\tiny±1.762} \\
\bottomrule
\end{tabularx}
\caption{\textbf{Average reward (with 95\% confidence intervals) for each method and pretrained discrete diffusion model.} Higher is better. \textbf{Bold} indicates the best method, and {\ul underline} denotes the second best. \ourmethod{} achieves the best average reward across all datasets and discrete diffusion models.}
\label{tab:main-results}
\vspace{-\baselineskip}
\end{table*}

\vspace{-0.5\baselineskip}
\paragraph{Biological Sequence Generation.}
We pretrain the discrete diffusion models on the DNA dataset provided by \citet{gosai2023mpra} (which we refer to as MPRA) which measures the enhancer acitivity of $\sim$700,000 DNA sequences using massively parallel reporter assays (MPRA). An Enformer model~\citep{avsec2021enformer} is trained to predict the enhancer activity level in the HepG2 cell line which is used as the reward function. We use the model trained on two different subsets of data and use one exclusively to provide the guidance signal during sampling and the other for evaluation. Higher predicted HepG2 activity indicates better performance.

\vspace{-0.5\baselineskip}
\paragraph{Results.} The quantitative results of reward-guided generation are shown in Tab.~\ref{tab:main-results}. As shown, \ourmethod \ or \ourmethod-B achieves the best reward in all cases. We also compute the diversity in Tab.~\ref{tab:diversity} in App.~\ref{subapp:diversity}, showing that the samples remain sufficienctly diverse despite achieving high reward.

SMC and SVDD require intermediate rewards which are often inaccurate. As such, these methods sometimes only yield slightly improved results over the pretrained model and, in some cases, slightly worse results due to inaccurate exploration. The performance of these methods is also highly reliant on both the smoothness of the reward function and the diffusion model's accuracy in $x_0$ prediction. \ourmethod, on the other hand, only use accurate clean rewards for guidance, resulting in accurate and efficient guidance. 

While BoN performs well on QED and SA for QM9 and ZINC250K, its performance is not as good on the HepG2 activity reward for MPRA and the ring count reward for QM9 and ZINC250K. This is due to the lack of trajectory guidance: BoN performs well only for settings where high-reward samples are likely to be sampled by the pretrained model and performs worse in cases where high-reward samples lie in low-density regions of the pretrained model. As shown in Fig.~\ref{fig:reward-distribution}, for ring count and HepG2 rewards, the high-reward samples generated by \ourmethod \ (red dashed line) lie in low-density regions unlikely to be sampled from by the pretrained model. \ourmethod, which uses the forward-backward diffusion process to guide the samples toward high-reward regions, is able to achieve high rewards across all datasets and reward functions even when high-reward samples lie in low-density regions of the pretrained model.

Although SGDD achieves the second-best result on the MPRA dataset, it is only applicable to uniform CTMC discrete diffusion models whereas our \ourmethod \ is applicable to all discrete diffusion models. Furthermore, SGDD still relies on computing intermediate rewards, limiting its performance on datasets such as QM9 and ZINC250K where the intermediate rewards are inaccurate.
\vspace{-\baselineskip}
\paragraph{Wall-Clock Time.} We include comparisons of wall-clock time and performance in Tab.~\ref{tab:zinc250k-matched-wall-clock}. Under matched wall-clock time, \ourmethod-B significantly outperforms all other baselines in ring count and HepG2. Additional wall-clock time comparisons are included in App.~\ref{app:wall-clock}.
\subsection{Comparison with Training-based Methods}
We provide comparison with standard training-based guidance methods. In Tab.~\ref{tab:training-based-comparison}, we compare \ourmethod{} against discrete classifier-free guidance (D-CFG)~\citep{nisonoffdiscreteguidance} on the QM9 dataset. \ourmethod{} outperforms D-CFG across all rewards when using MDM as the base model. When using USM as the base model, \ourmethod{} outperforms D-CFG on QED and ring count while achieving comparable SA. Furthermore, \ourmethod{} can be applied on top of D-CFG, resulting in further improvements except in ring count for ZINC250K MDM. This is due to the MDM CFG model collapsing to narrow modes when trained on ZINC250K ring count, resulting in insufficient exploration or the space when applying \ourmethod.

\begin{table}[ht]
{
\scriptsize
\begin{tabularx}{\linewidth}{>{\centering\arraybackslash}m{0.1\textwidth}|>{\centering\arraybackslash}m{0.07\textwidth}YYYY}
             \toprule
              & \multicolumn{5}{c}{\textbf{ZINC250K: Rings}} \\
             & \ourmethod-B  & BoN   & BoN   & SMC   & SVDD \\
             \midrule
Batch Size   & 8     & 78    & 8     & 14    & 886 \\
NFE         & 1024  & 5750  & 1024  & 1024  & 65536 \\
\rowcolor{gray!12} Reward       & 5.315 & 4.945 & 4.312 & 2.630 & 2.607 \\
\rowcolor{gray!12} Time (s)     & 321   & 331   & 359   & 320   & 276  \\
\bottomrule
\end{tabularx}
}
\caption{\textbf{ZINC250K rewards under matched wall-clock time.} Rewards and time are computed from 128 drawn samples. \ourmethod{}-B significantly outperforms all other baselines in ring count while being comparable with BoN in QED and SA.}
\vspace{-\baselineskip}
\label{tab:zinc250k-matched-wall-clock}
\end{table}

\begin{table}[h]
\scriptsize
\begin{tabularx}{\linewidth}{>{\centering\arraybackslash}m{0.06\textwidth}|>{\centering\arraybackslash}m{0.1\textwidth}|*{3}{Y}}
\toprule
\multicolumn{2}{c|}{} & \multicolumn{3}{c}{\bf QM9} \\
\multicolumn{2}{c|}{} & \multicolumn{1}{Y}{QED} & \multicolumn{1}{Y}{Rings} & \multicolumn{1}{Y}{SA} \\ 
\midrule
\multirow{3}{*}{MDM}          & D-CFG & 0.820 & 5.886 & 0.888   \\
                              
                              & \ourmethod{} & 0.910 & {\bf 8.091} & 0.905 \\
                              
                              & D-CFG + \ourmethod{} & {\bf 0.927} & 7.214 & {\bf 0.928} \\
                              \midrule
\multirow{3}{*}{USM}          & D-CFG & 0.903 & 5.613 & 0.908   \\
                              
                              & \ourmethod{} & 0.917 & 8.703 & 0.898 \\
                              
                              & D-CFG + \ourmethod{} & {\bf 0.927} & {\bf 8.943} & {\bf 0.920} \\
\bottomrule
\end{tabularx}
\caption{\textbf{Comparison with training-based guidance on ZINC250K.} \ourmethod{} outperforms D-CFG. \ourmethod{} can also be combined with D-CFG by adopting D-CFG-based proposals, resulting in further improvements.}
\vspace{-\baselineskip}
\label{tab:training-based-comparison}
\end{table}

\subsection{Analysis and Hyperparameter Studies} In this section, we analyze the effects of the various components of \ourmethod. 
\vspace{-\baselineskip}
\paragraph{Convergence.} Since \ourmethod \ requires running sufficiently many iterations until the Markov chain converges to its stationary distribution, we empirically analyze the speed of convergence as well as the impact of the chain length. The autocorrelation function shown in Fig.~\ref{fig:acf-zinc} quickly decays to zero within the first 2000 iterations and remains small until the end, indicating that the chain converges quickly. We also report additional autocorrelation plots as well as the acceptance rate in App.~\ref{subapp:acceptance-rate}.

\begin{figure}[h]
    \centering
    \includegraphics[width=0.9\linewidth]{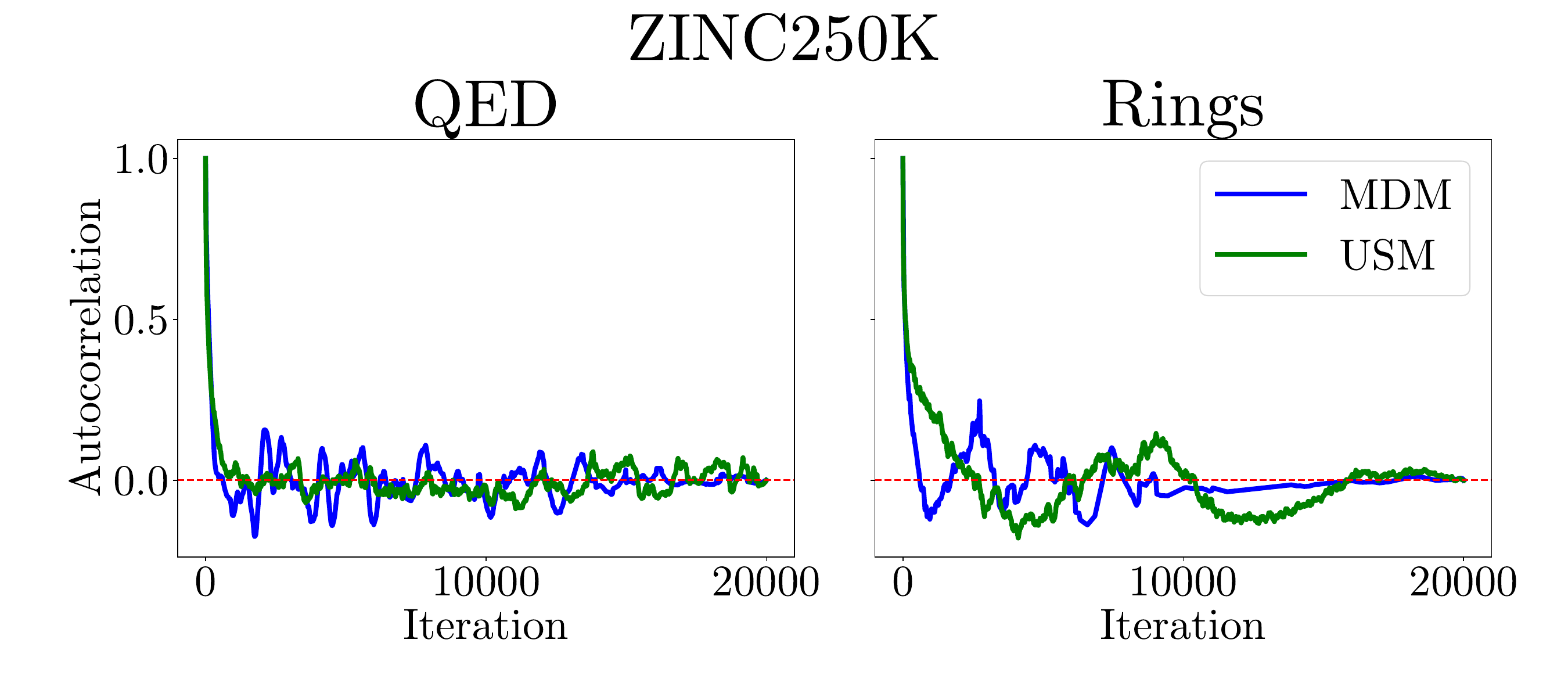}
    \vspace{-\baselineskip}
    \caption{\textbf{Autocorrelation plots for ZINC250K MDM and USM.} The autocorrelation function quickly vanishes to zero within the first 2000 iterations, indicating fast mixing.}
    \label{fig:acf-zinc}
\end{figure}
\vspace{-\baselineskip}
\paragraph{NFE.} We also provide additional experiments varying the NFE in App.~\ref{app:nfe}, showing that \ourmethod{} outperforms the baseline across various NFEs.
\vspace{-0.5\baselineskip}
\paragraph{Initialization.} Since \ourmethod{} constructs a Markov chain of clean samples, the choice of initialization could impact the convergence rate of the chain. We have already shown in Fig.~\ref{fig:acf-zinc} that the Markov chain shows fast mixing. We now provide further evidence that the initialization $x_0^{(1)}$ of the chain does not significantly affect the convergence. In Fig.~\ref{fig:warm-start}, we plot the reward trajectories for \ourmethod{} with 64 different initializations. The reward distribution across the 64 chains are shown as histograms on top (corresponding to the iterations marked by the gray dashed lines). Although each initial point starts with a different reward, many of which are zero, as shown in the histogram, the rewards of each chain quickly converge to the same high-reward region.

\begin{figure}
    \centering
    \includegraphics[width=\linewidth]{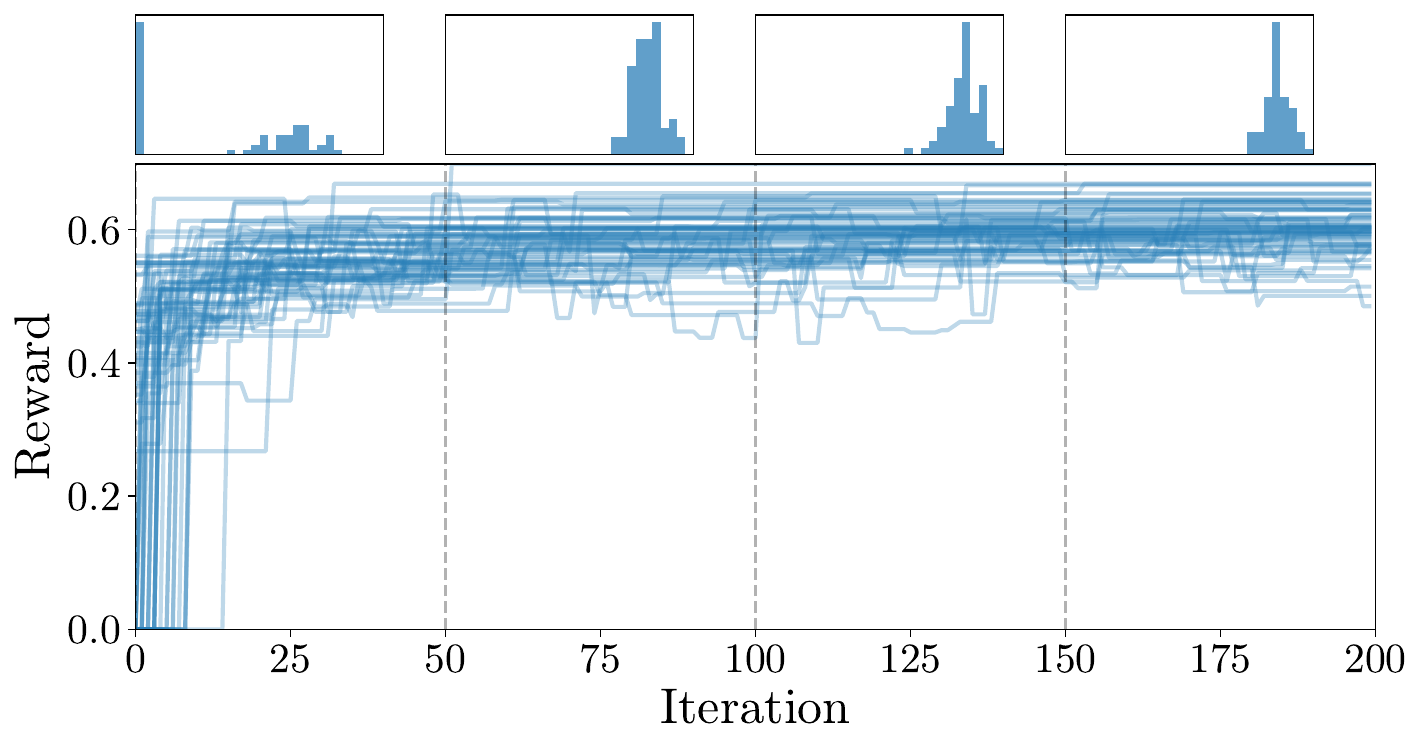}
    \vspace{-1.5\baselineskip}
    \caption{\textbf{Reward trajectories with different initializations.} We plot the reward trajectories of \ourmethod{} for 64 different initializations using MDM on QM9 QED (bottom), along with the reward distribution of the 64 different chains (top). Trajectories with lower reward also quickly converge to high reward regions of the distribution, demonstrating robustness to the initialization.}
    \label{fig:warm-start}
\end{figure}
\vspace{-\baselineskip}
\paragraph{Results with Varying Number of Reverse Steps $M$.} \ourmethod \ enables fast sampling by leveraging an $M$-step reverse sampler in the forward-reverse proposal to produce a new candidate sample. As shown in Tab.~\ref{tab:sdedit-steps}, varying $M$ does not significantly affect the rewards achieved by \ourmethod, except when using large $M$ for the ring count reward. We hypothesize that this is due to limited compute since using for ZINC250K tasks results in only 88 iterations of \ourmethod, which may not be sufficient for harder tasks such as ring count where high reward samples lie in low density regions of the pretrained model's distribution. This hypothesis is also supported by the autocorrelation plot in Fig.~\ref{fig:acf-zinc} which shows that the autocorrelation function for USM (green) requires more iterations to converge for ring count.

\begin{table}[!h]
{
\small
\begin{tabularx}{\linewidth}{Y|YYY}
             \toprule
             & \multicolumn{3}{c}{\textbf{ZINC250K}} \\
             M & QED  & Ring   & SA \\
             \midrule
5 & 0.917 & 8.703 & 0.898 \\
10 & 0.910 & 8.477 & 0.901 \\
15 & 0.917 & 8.256 & 0.895 \\
20 & 0.917 & 6.885 & 0.897 \\
\bottomrule
\end{tabularx}
}
\caption{\textbf{Results with varying number of reverse steps $M$ for ZINC250K USM.} We vary the number of reverse steps $M$ in the forward-reverse proposal. Varying does not significantly affect the rewards achieved by \ourmethod.}
\vspace{-\baselineskip}
\label{tab:sdedit-steps}
\end{table}

\paragraph{Time Parameters $t_\text{lo}$ and $t_\text{hi}$.} We plot the reward while varying the time parameters $t_\text{lo}$ and $t_\text{hi}$ in Fig.~\ref{fig:t-ablation} in App.~\ref{subapp:time-params}. As shown, the reward and diversity achieved by \ourmethod{} is robust to the choice $t_\text{lo}$ and $t_\text{hi}$.
\vspace{-0.5\baselineskip}
\section{Conclusion}
\vspace{-0.5\baselineskip}
We propose \textbf{\ourmethod}, a novel training-free reward-guided sampler for discrete diffusion which avoids reliance on noisy intermediate rewards based on constructing a Markov Chain of clean samples via the Metropolis-Hastings algorithm. Our proposal distribution, modeled through a forward–backward combination, makes the acceptance probability tractable. Experiments on molecular and biological sequence generation with various reward functions demonstrate superior performance of our method compared to previous methods that rely on intermediate rewards.

\vspace{-0.5\baselineskip}
\paragraph{Limitations and Future Work.} While the use of the MH algorithm allows \ourmethod{} to sample from the target distribution, theoretical convergence guarantees depend on the existence of a perfect denoiser. This assumption is mild since discrete diffusion models are trained to match the marginal distributions of the forward process~\citep{austin2021d3pm}. Exact quantification of the discrepancy in the learned marginal distributions is an open area of research and is left for future works. 

\bibliography{references}
\bibliographystyle{icml2026}

\newpage
\appendix
\onecolumn
\section*{Appendix}
\section{Metropolis-Hastings Algorithm}
\label{app:mcmc-proofs}
In this section, we present a brief overview of the Metropolis-Hastings (MH) algorithm. For more details, we refer the reader to chapter 12 section 2 of \citet{murphy2023probabilistic}.

The MH algorithm constructs a Markov chain which converges to a target distribution $p^\star(x)$. In order to do so, a \newterm{proposal distribution} $q(\cdot | x)$ proposes to move from the current state $x$ to a new state $x' \sim q(\cdot | x)$. After proposing the new state $x'$, MH decides whether to accept or reject the new state with the acceptance probability
\begin{equation*}
    A(x'|x) = \min \left( 1, \alpha \right), \quad \alpha=\frac{p^\star(x')q(x|x')}{p^\star(x)q(x'|x)}.
\end{equation*}

Intuitively, the accept-reject step is necessary as the proposal distribution may not match the target distribution. Samples closer to the target distribution are accepted with higher probability whereas samples further from the target distribution are rejected. The MH algorithm is summarized in Alg.~\ref{alg:mh-algo}. The Markov chain constructed by the MH algorithm has the following transition matrix:
\begin{equation}
\label{eq:mh-transition}
    p(x'|x) = 
    \begin{cases}
        q(x'|x) A(x'|x) & \text{if } x' \not= x \\
        q(x|x) + \sum_{x' \not= x} q(x'|x)(1 - A(x'|x)) & \text{otherwise}
    \end{cases}
\end{equation}

\begin{figure}[h]
\begin{algorithm}[H]
\small
\caption{\textbf{Metropolis-Hastings Algorithm (MH)}}
\label{alg:mh-algo}
\begin{algorithmic}[1]
\FOR{$k = 1, \dots, K$}
    \STATE $\tilde{x}^{(k+1)} \sim q(\cdot | x^{(k})$
    \STATE $\alpha \gets \frac{p^\star(x') q(x|x')}{p^\star(x) q(x'|x)}$
    \STATE $A \gets \min (1, \alpha)$
    \STATE $\rho \sim \gU(0,1)$
    \STATE
    $x^{(k+1)} \gets \tilde{x}^{(k+1)} \text{ if } \rho < A \text{ else } x^{(k)}
    $
\ENDFOR \\
\STATE $\{ x^{(1)}, \dots, x^{(K)} \}$
\end{algorithmic}
\end{algorithm}
\end{figure}

\begin{theorem}[Theorem 12.2.1 from~\cite{murphy2023probabilistic}]
    If the transition matrix defined by Eq.~\ref{eq:mh-transition} defined by the MH algorithm is ergodic and irreducible, $p^\star$ is its unique limiting distribution.
\end{theorem}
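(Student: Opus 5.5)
The plan is to prove the statement in two steps. First, show that $p^\star$ is a stationary distribution of the MH transition matrix in Eq.~\ref{eq:mh-transition}, by verifying detailed balance. Second, invoke the fundamental convergence theorem for finite (or countable) ergodic, irreducible Markov chains to get uniqueness and convergence of the marginals to $p^\star$ from any initialization. Throughout, the state space $\gX$ is discrete, as it is for the sequences considered in the paper.

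\textbf{Detailed balance.} Fix $x \neq x'$. We want to show
\begin{equation*}
p^\star(x)\,q(x'|x)\,A(x'|x) = p^\star(x')\,q(x|x')\,A(x|x').
\end{equation*}
Both sides equal $\min\bigl(p^\star(x)q(x'|x),\, p^\star(x')q(x|x')\bigr)$. This follows directly from the definition of $\alpha$: one can split into the cases $\alpha \ge 1$ and $\alpha < 1$, or simply multiply $\min(1,\alpha)$ through by $p^\star(x)q(x'|x)$. When the denominator is zero, adopt the usual convention $A=1$ or handle the case separately. The diagonal case $x = x'$ holds trivially. Summing the detailed-balance identity over $x$ then gives
\begin{equation*}
\sum_x p^\star(x)\,p(x'|x) = p^\star(x')\sum_x p(x|x') = p^\star(x').
\end{equation*}
This uses that each row of Eq.~\ref{eq:mh-transition} sums to one, which I check directly: the rejection mass is placed on the diagonal. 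Hence $p^\star$ is stationary.

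\textbf{Uniqueness and limit.} Irreducibility means every state communicates. Together with ergodicity, here aperiodicity plus positive recurrence, which is automatic on a finite state space given irreducibility, the standard Markov chain convergence theorem (the Perron–Frobenius argument, or coupling) gives the following: there is exactly one stationary distribution, and $p^{(k)}(x'|x) \to \pi(x')$ for every $x$. Since $p^\star$ is stationary, it must be that $\pi = p^\star$, so $p^\star$ is the unique limiting distribution.

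The main obstacle is not the algebra but stating the hypotheses precisely enough for the convergence theorem to apply. On a countably infinite space, irreducibility and aperiodicity alone do not suffice, and positive recurrence is needed. Here it follows from the existence of the stationary probability $p^\star$ for an irreducible chain. I would treat this by citing the standard theorem (as in \citet{murphy2023probabilistic}) rather than reproving it, and by noting that in the paper's setting $\gX$ consists of finite-length token sequences over a finite vocabulary, hence is finite.
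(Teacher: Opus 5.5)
Your proposal is correct and follows the paper's route: detailed balance for the MH kernel, then stationarity, then the standard convergence theorem for irreducible, aperiodic chains for uniqueness and the limit. Your identity that both sides equal $\min\bigl(p^\star(x)q(x'|x),\,p^\star(x')q(x|x')\bigr)$ is exactly the paper's WLOG case split. You are more careful than the paper, which only asserts that detailed balance ``suffices'' and leaves the appeal to the convergence theorem, the row-sum check, and the zero-denominator cases implicit.
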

\begin{proof}
    Consider two states $x'$ and $x$. A Markov chain is said to satisfy the detailed balance equation if the following holds:
    \begin{equation}
    \label{eq:detailed-balance}
        p^\star(x)p(x'|x) = p^\star(x')p(x|x').
    \end{equation}

    It is known that if a Markov chain satisfies the detailed balance equation, then $p^\star$ is its stationary distribution (Theorem 2.6.3 from~\citet{murphy2023probabilistic}).
    
    To show that $p^\star$ is the unique limiting distribution of the Markov chain defined by Eq.~\ref{eq:mh-transition}, it suffices to show that it satisfies the detailed balance condition in Eq.~\ref{eq:detailed-balance}. 

    Without loss of generality, assume $p^\star(x)q(x'|x) \ge p^\star(x')q(x|x')$. Then, $\alpha = \frac{p^\star(x')q(x|x')}{p^\star(x)q(x'|x)} < 1$ and thus $A(x'|x)=\alpha$. Similarly, by switching the arguments, $A(x|x')=1$. 

    To move from $x$ to $x'$, $x'$ must be proposed and accepted. Hence,
    \begin{align*}
        p(x'|x) &= q(x'|x)A(x'|x) & \quad \text{from Eq.~\ref{eq:mh-transition}} \\
        &= \frac{p^\star(x')q(x|x')}{p^\star(x)}
    \end{align*}

    It suffices to show that $q(x|x')=p(x|x')$:
    \begin{align*}
        q(x|x') &= q(x|x')A(x|x') & \quad \because A(x|x')=1 \\
        &= p(x|x') & \quad \text{from Eq.~\ref{eq:mh-transition}}
    \end{align*}

    Since the MH Markov chain satisfies the detailed balance equation, $p^\star$ is its stationary distribution.
\end{proof}
\section{Derivation of the Acceptance Probability}
\label{app:accept-probs}
Recall from Sec.~\ref{sec:method} that the proposal distribution $q(\cdot | x_0)$ is defined in Eq.~\ref{eq:proposal} as:
\begin{equation}
\label{eq:forward-reverse-process}
    q(x_0'|x_0) := p_t(x_t|x_0) p(x_0'|x_t)
\end{equation}

We calculate $\alpha$ which is used to compute the acceptance probability $A$. First, we draw and fix a noisy sample $x_t \sim p_t(\cdot | x_0)$. Once $x_t$ has been drawn, $\alpha$ simplifies to

\begin{align*}
    \alpha &= \exp \left( \frac{r(x_0') - r(x_0)}{\beta} \right) \frac{p(x_0') q(x_0|x_0')}{p(x_0) q(x_0'|x_0)} \\
    &= \exp \left( \frac{r(x_0') - r(x_0)}{\beta} \right) \frac{p(x_0') p(x_t|x_0') p(x_0|x_t)}{p(x_0) p(x_t|x_0) p(x_0'|x_t)} \\
    &= \exp \left( \frac{r(x_0') - r(x_0)}{\beta} \right) \frac{p(x_0') p(x_t|x_0') p(x_0|x_t, x_0')}{p(x_0) p(x_t|x_0) p(x_0'|x_t, x_0)} \\
    &= \exp \left( \frac{r(x_0') - r(x_0)}{\beta} \right) \frac{p(x_0,x_t,x_0')}{p(x_0',x_t,x_0)} \\
    &= \exp \left( \frac{r(x_0') - r(x_0)}{\beta} \right),
\end{align*}

where the third line follows from the conditional independence of $x_0$ and $x_0'$ given $x_t$ due to the Markov property: $p(x_0|x_t) = p(x_0|x_t,x_0')$. \\

Although it may appear that this Markov chain is time-inhomogeneous since $t \sim \gU(t_\text{lo}, t_\text{hi})$ and $x_t \sim p(\cdot | x_0)$ are resampled at each step, it is secretly \emph{time-homogeneous}. To see this, we simplify the proposal distribution to an equivalent time-homogeneous proposal distribution. The probability of proposing $x_0'$ given that we are currently at state $x_0$ is $p_t(x_t|x_0) p(x_0'|x_t)$ with probability $\frac{1}{t_\text{hi}-t_\text{lo}}$. To obtain the probability $q(x_0'|x_0)$, we simply condition the proposal probability on $t$ and $x_t$ and integrate over all possible $t$ and $x_t$:
\begin{align*}
    q(x_0'|x_0) 
    &= \int_{t_\text{lo}}^{t_\text{hi}} \int_{\mathbb{X}_t} p_t(x_t|x_0)p(x_0'|x_t)f(t) dx_t dt \\
    &= \E_{t \sim \gU(t_\text{lo}, t_\text{hi})} \E_{x_t \sim p_t(\cdot | x_0)} [p(x_0'|x_t)],
\end{align*}
which is time-homogeneous. Thus, we still have an equivalent time-homogeneous Markov chain and standard convergence proofs in App.~\ref{app:mcmc-proofs} apply.
\section{Experiment Details}
\label{app:experiment-details}

\subsection{Pretrained Models}
We provide details on the training setup and hyperparameters of each diffusion model on each dataset in Tab.~\ref{tab:training}. For QM9 and ZINC250K, we use the transformer architecture for the diffusion, whereas for MPRA we use the CNN architecture, following~\citet{starkdirichletfm} and \citet{wangdrakes}. Note that the SEDD-U model pretrained on MPRA is taken from the publicly available checkpoint provided by~\citet{chu2025sgdd}. All models except for SEDD-U was trained using Adam~\citep{adam2014adam} while SEDD-U was trained using AdamW~\citep{loshchilovAdamW}.

\begin{table}[h]
\centering
\renewcommand{\arraystretch}{1.2} 
\small
\begin{tabularx}{\linewidth}{YYYY|YY}
\toprule
& \multicolumn{3}{c|}{MDM, USM, and SEDD-M} & \multicolumn{2}{c}{SEDD-U}\\
& QM9 & ZINC250K & MPRA & QM9 & ZINC250K\\
\midrule
Train steps & 25,000 & 50,000 & 131,500 & 40,000 & 100,000 \\
Context size & 32 & 74 & 200 & 32 & 74\\
Batch size & 1024 & 384 & 512 & 512 & 512\\
LR & $3e^{-4}$ & $3e^{-4}$ & $2e^{-3}$ & $3e^{-4}$ & $3e^{-4}$ \\
Optim. & \textsc{Adam} & \textsc{Adam} & \textsc{Adam} & \textsc{AdamW} & \textsc{AdamW} \\
 & (0.9, 0.999) & (0.9, 0.999) & (0.9, 0.999) & (0.9, 0.999) & (0.9, 0.999) \\
LR sched. & Constant Warmup & Constant Warmup & Cosine Decay & Constant Warmup & Constant Warmup \\
 & - & - & $3e^{-6}$ min. & - & - \\
LR warmup steps & 2,500 & 2,500 & 3,000 & 2,500 & 2,500 \\
\midrule
GPU count & 2 & 2 & 2 & 8 & 4 \\
GPU type & RTX3090 & RTX3090 & RTX3090 & RTX3090 & RTX3090 \\
\bottomrule
\end{tabularx}
\caption{\textbf{Training setup and Hyperparameters.} Training setup and hyperparameters on QM9, ZINC250K, and MPRA using MDM~\citep{sahoo2024mdlm}, USM~\citep{schiff2024udlm}, SEDD-U~\citep{lou2024sedd}, and SEDD-M~\citep{lou2024sedd}.}
\label{tab:training}
\end{table}

\subsection{Reward-Guided Sampling}
We provide experimental details on each evaluation setup. We sample 1024 molecules for QM9 and ZINC250K, and 640 DNA sequences for MPRA. We fix 32 denoising steps for QM9, 74 denoising steps for ZINC250K, and 128 denoising steps for MPRA. For \ourmethod-B, we use a batch size of 8 for QM9 and ZINC250K, and a batch size of 4 for MPRA.

For experiments on the QM9~\citep{ramakrishnan2014qm9} and ZINC250K~\citep{irwin2012zinc} datasets, we fix the total diffusion model NFE per sample to be 1024. For MPRA~\citep{gosai2023mpra}, we fix the NFE as 1000 as done by \citet{chu2025sgdd}. Since one run of \ourmethod \ generates multiple samples, we draw $S$ samples from the resulting chain by discarding the first half of the chain (burn-in) and taking $S$ equally-spaced samples from the latter half. Since $S$ samples are generated, we scale the NFE by $S$ for \ourmethod. This scaling highlights one of the key benefits of \ourmethod: after the initial burn-in period, samples can be generated quickly using a few steps. For all experiments, we fix $\beta=0.02$. To obtain an $x_0$ prediction from SEDD, we run sampling using one step to jump to time $t=1$ (clean sample).

Hyperparameters used by our method is shown in Tab.~\ref{tab:ours-hyperparameters}. Initial denoising steps refer to the number of steps used to obtain the initial clean sample $x_0^{(1)}$ whereas denoising steps $M$ refer to the number of steps used during the $M$-step reverse process in every MH iteration. Note that $M$ is much smaller than the initial denoising steps.

\begin{table}[h]
\centering
\renewcommand{\arraystretch}{1.2} 
\small
\begin{tabularx}{\linewidth}{YYYY}
\toprule
& QM9 & ZINC250K & MPRA \\
\midrule
Samples per Iteration $S$ & 128 & 128 & 64 \\
MH Iterations $K$ & 26208 & 26199 & 6390 \\
Initial Denoising Steps & 32 & 74 & 100 \\
Denoising Steps $M$ & 5 & 5 & 10 \\
$t_\textit{l}$ & 0.2 & 0.2 & 0.2\\
$t_\textit{h}$ & 0.5 & 0.5 & 0.7 \\
\bottomrule
\end{tabularx}
\caption{\textbf{Hyperparameters for \ourmethod.}}
\label{tab:ours-hyperparameters}
\end{table}
\vspace{-\baselineskip}
\section{Additional Experimental Results}
\label{app:diagonistics}
\subsection{Diversity}
\label{subapp:diversity}
We compute the diversity as follows:
\begin{itemize}
    \item For molecule tasks (QM9 and ZINC250K), we compute the mean pairwise Tanimoto similarity based on the Morgan2 fingerprint, and subtract it from 1.
    \item For the DNA task (MPRA), we subtract the mean pairwise cosine similarity of the one-hot encodings from 1.
\end{itemize}
\vspace{-0.5\baselineskip}
The results are shown in Tab.~\ref{tab:diversity}. BoN and \ourmethod{} \ have a slight decrease in diversity while SMC suffers from significant degradation in diversity. This can be attributed to samples clustering around the high-reward modes. \ourmethod{} consistently achieves a diversity score of over 0.8 on molecule tasks which suggests that the samples are indeed diverse, with an average Tanimoto similarity of less than 0.2. On DNA tasks, the cosine similarity between the sequences is less than 0.3, indicating that the generated sequences are diverse.
\vspace{-\baselineskip}
\begin{table}[h]
\small
\begin{tabularx}{\linewidth}{>{\centering\arraybackslash}m{0.1\textwidth}|>{\centering\arraybackslash}m{0.13\textwidth}|*{3}{Y}|*{3}{Y}|Y}
\toprule
\multicolumn{2}{c|}{} & \multicolumn{3}{|c|}{\textbf{QM9}} & \multicolumn{3}{|c|}{\textbf{ZINC250K}} & \multicolumn{1}{|c}{\textbf{MPRA}} \\
\multicolumn{2}{c|}{} & \multicolumn{1}{|c}{QED} & \multicolumn{1}{c}{Rings} & \multicolumn{1}{c|}{SA} & \multicolumn{1}{|c}{QED} & \multicolumn{1}{c}{Rings} & \multicolumn{1}{c|}{SA} & \multicolumn{1}{|c}{HepG2} \\ 
\midrule
\midrule
\multirow{5}{*}{MDM}          & Pretrained & 0.922                   & 0.922                          & 0.922                  & 0.910                   & 0.910                          & 0.910                  & 0.749                              \\
                              & BoN        & 0.904             & 0.884                    & 0.894            & 0.876             & 0.874                    & 0.876           & 0.749                              \\
                              & SMC        & 0.834                   & 0.916                          & 0.802                  & 0.929                   & 0.918                          & 0.876                  & 0.747                        \\
                              & SVDD       & 0.920                   & 0.920                          & 0.920                  & 0.900                   & 0.896                          & 0.900                  & 0.747                              \\
                               & \CC \textbf{\ourmethod}         & \CC 0.885          & \CC 0.804                & \CC 0.862         & \CC 0.874          & \CC 0.797                 & \CC 0.874         & \CC 0.742                     \\
                               & \CC \textbf{\ourmethod-B} 
                               & \CC 0.891 & \CC 0.854 & \CC 0.868 
                               & \CC 0.872 & \CC 0.877 & \CC 0.839 
                               & \CC 0.729 \\
\midrule
\multirow{5}{*}{USM}          & Pretrained & 0.921                   & 0.921                          & 0.921                  & 0.879                   & 0.879                          & 0.879                  & 0.748                              \\
                              & BoN        & 0.895             & 0.890                   & 0.912           & 0.862             & 0.866                    & 0.841            & 0.749                       \\
                              & SMC        & 0.923                   & 0.920                          & 0.927                  & 0.874                    & 0.875                          & 0.874                  & 0.748                              \\
                              & SVDD       & 0.922                   & 0.918                          & 0.921                  & 0.875                    & 0.875                          & 0.875                  & 0.748                              \\
                               & \CC \textbf{\ourmethod}         & \CC 0.880          & \CC 0.827                 & \CC 0.868         & \CC 0.856          & \CC 0.837                 & \CC 0.841         & \CC 0.734                     \\
                               & \CC \textbf{\ourmethod-B} 
                               & \CC 0.881 & \CC 0.856 & \CC 0.881  
                               & \CC 0.853 & \CC 0.861 & \CC 0.838 
                               & \CC 0.732 \\
\midrule
\multirow{5}{*}{SEDD-M}    & Pretrained & 0.926                   & 0.926                          & 0.926                  & 0.905                   & 0.905                          & 0.905                  & 0.748                              \\
                           & BoN        & 0.903             & 0.889                    & 0.926            & 0.874            & 0.874                   & 0.877            & 0.749                        \\
                           & SMC        & 0.925 & 0.914 & 0.927 & 0.902 & 0.902 & 0.902 & 0.748 \\
                           & SVDD       & 0.928 & 0.928 & 0.928 & 0.907 & 0.907 & 0.907 & 0.748 \\
                           & \CC \textbf{\ourmethod}         & \CC 0.885          & \CC 0.847                 & \CC 0.890         & \CC 0.865         & \CC 0.880                & \CC 0.837         & \CC 0.749                     \\
                           & \CC \textbf{\ourmethod-B} 
                           & \CC 0.898 & \CC 0.858 & \CC 0.898  
                           & \CC 0.887 & \CC 0.888 & \CC 0.870
                           & \CC 0.732 \\
\midrule
\multirow{6}{*}{SEDD-U} & Pretrained & 0.922                   & 0.922                          & 0.922                  & 0.879                   & 0.879                          & 0.879                  & 0.631                              \\
                        & BoN        & 0.900             & 0.900                    & 0.915            & 0.866             & 0.867                   & 0.851            & 0.661                              \\
                        & SMC        & 0.901                   & 0.905                          & 0.910                  & 0.872                   & 0.873                          & 0.866                  & 0.661                                \\
                        & SVDD       & 0.906                   & 0.907                          & 0.914                  & 0.873                    & 0.877                          & 0.870                  & 0.666                                \\ 
                        & SGDD       & 0.906                   & 0.913                          & 0.908                  & 0.868                   & 0.866                          & 0.859                  & 0.650                       \\
                        & \CC \textbf{\ourmethod}         & \CC 0.880          & \CC 0.864               & \CC 0.863        & \CC 0.855          & \CC 0.868                 & \CC 0.825        & \CC 0.709 \\
                        & \CC \textbf{\ourmethod-B} 
                        & \CC 0.869 & \CC 0.865  & \CC 0.848 
                        & \CC 0.859 & \CC 0.863 & \CC 0.843
                        & \CC 0.662 \\
\bottomrule
\end{tabularx}
\caption{\textbf{Diversity metrics for each method and reward.} Higher is better.}
\vspace{-\baselineskip}
\label{tab:diversity}
\end{table}
\vspace{-\baselineskip}
\subsection{Wall-Clock Time Comparisons} 
\label{app:wall-clock}
We provide additional wall-clock time comparisons in Tab.~\ref{tab:app-zinc250k-matched-wall-clock}. We also provide a comparison of the wall-clock times for \ourmethod{} and \ourmethod-B in Tab.~\ref{tab:app-wall-clock-ours}, showing that \ourmethod-B provides significantly accelerated sampling compared to \ourmethod{}.
\vspace{-0.5\baselineskip}
\begin{table}[ht]
{
\scriptsize
\begin{tabularx}{\linewidth}{Y|Y Y}
\toprule
             & \multicolumn{1}{Y}{\ourmethod} & \multicolumn{1}{Y}{\ourmethod-B} \\
\midrule
Time (s)     & 3029                     & 334 \\
Batch Size   & 1                        & 8 \\
NFE & 1024                     & 1024  \\
Reward       & 0.916                    & 0.917  \\
\bottomrule
\end{tabularx}
}
\caption{\textbf{Wall-clock time comparisons.} Wall-clock time measured \ourmethod{} applied to USM guided by ZINC250K QED reward. Batching significantly improves the speed of \ourmethod.}
\vspace{-\baselineskip}
\label{tab:app-wall-clock-ours}
\end{table}
\begin{table}[h]
{
\scriptsize
\begin{tabularx}{\linewidth}{>{\centering\arraybackslash}m{0.06\textwidth}|YYYYY|YYYYY|YYYYY}
             \toprule
             & \multicolumn{15}{c}{ZINC250K}                                                                                          \\
             \midrule
             & \multicolumn{5}{c|}{QED}                & \multicolumn{5}{c|}{Rings}             & \multicolumn{5}{c}{SA}                \\
             & \ourmethod  & BoN   & BoN    & SMC   & SVDD  & \ourmethod  & BoN   & BoN   & SMC   & SVDD  & \ourmethod  & BoN   & BoN   & SMC   & SVDD  \\
             \midrule
Time (s)     & 334   & 334   & 359    & 359   & 264   & 321   & 331   & 359   & 320   & 276   & 337   & 328   & 300   & 341   & 279   \\
Batch Size   & 8     & 78    & 8      & 14    & 886   & 8     & 78    & 8     & 14    & 886   & 8     & 78    & 8     & 14    & 886   \\
NFE & 1024  & 5750  & 1024   & 1024  & 65536 & 1024  & 5750  & 1024  & 1024  & 65536 & 1024  & 5750  & 1024  & 1024  & 65536 \\
Reward       & 0.917 & 0.934 & 0.9133 & 0.766 & 0.719 & 5.315 & 4.945 & 4.312 & 2.630 & 2.607 & 0.912 & 0.919 & 0.892 & 0.792 & 0.755 \\
\bottomrule
\end{tabularx}
}
\caption{\textbf{USM ZINC250K rewards under matched wall-clock time.} Rewards and time are computed from 128 drawn samples. \ourmethod-B significantly outperforms all other baselines in ring count while being comparable with BoN in QED and SA.}
\vspace{-\baselineskip}
\label{tab:app-zinc250k-matched-wall-clock}
\end{table}

\subsection{Autocorrelation Plots and Acceptance Rate}
\label{subapp:acceptance-rate}
We provide autocorrelation plots for ZINC250K MDLM and USM in Fig.~\ref{fig:app-acf-zinc} to further investigate the burn-in time. The autocorrelation function quickly vanishes to zero within the first 2000 iterations and remains small until the end, indicating that the chain converges quickly. Since these metrics are only heuristics for measuring convergence, we conservatively discard the first half of the chain. However, the autocorrelation plots suggest that it may be possible to burn fewer samples.

\begin{figure}[h]
    \centering
    \includegraphics[width=\linewidth]{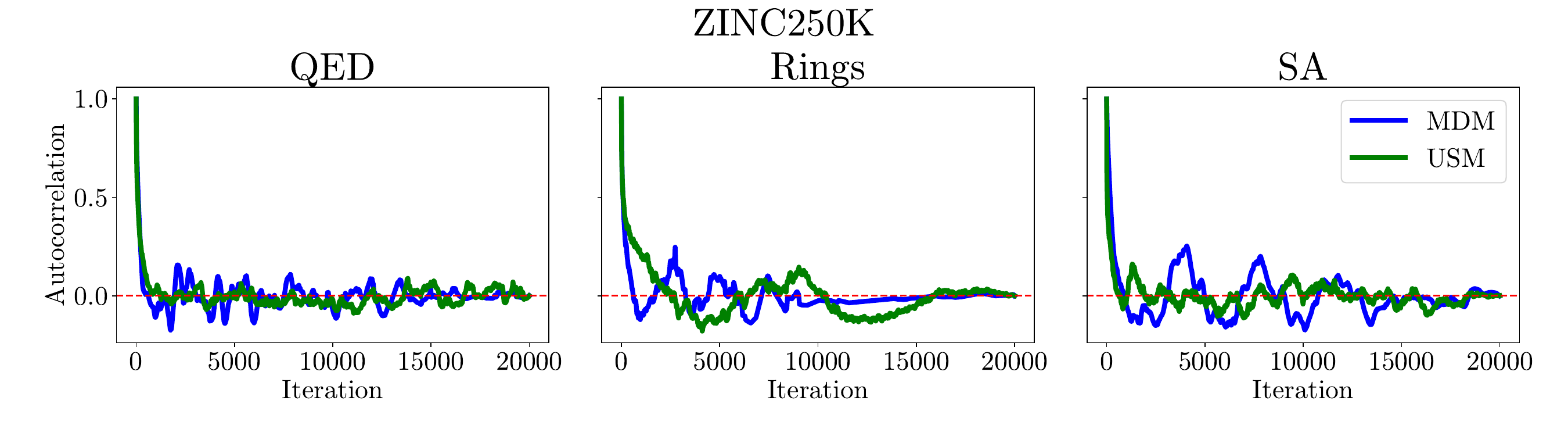}
    \caption{\textbf{Autocorrelation plots for ZINC250K MDM and USM.}}
    \label{fig:app-acf-zinc}
\end{figure}

We provide acceptance rates for \ourmethod{} \ with MDLM and USM in Tab.~\ref{tab:acceptance-rate}. Although the acceptance rate for MDLM is quite low due to MDLM often generating invalid molecules, the acceptance rate is still acceptable as we draw 128 samples from ~13k samples after the burn-in period. On the other hand, USM has much higher acceptance rates which can be attributed to its higher validity rate due to the ability to correct errors from previous diffusion steps
\begin{table}[h]
\centering
\begin{tabularx}{\textwidth}{Y|YYY|YYY|Y}
\toprule
                          & \multicolumn{3}{c|}{QM9} & \multicolumn{3}{c|}{ZINC250K} & MPRA  \\
                          & QED    & Rings  & SA     & QED      & Rings    & SA      & HepG2 \\
                          \midrule
\multicolumn{1}{c|}{MDM} & 0.032  & 0.002  & 0.009  & 0.009    & 0.003    & 0.011   & 0.029 \\
\multicolumn{1}{c|}{USM}  & 0.510  & 0.330  & 0.346  & 0.725    & 0.597    & 0.711   & 0.038 \\
\bottomrule
\end{tabularx}
\caption{\textbf{Acceptance rates for MDM and USM.}}
\label{tab:acceptance-rate}
\end{table}

\newpage
\subsection{NFE Comparison}
\label{app:nfe}
In this section, we compare the performance of each method across various NFEs for QM9 and ZINC250K. We use NFEs $\in \{512, 1024, 2048, 4096\}$ for molecule tasks and NFEs $\in \{ 500, 1000, 2000, 4000 \}$ for MPRA. The results are shown in Fig.~\ref{fig:nfes}. As shown in the figure, \ourmethod{} consistently and significantly outperforms all other methods in ring count and HepG2 activity across all diffusion models and datasets. \ourmethod{} also outperforms all other methods in SA except for BoN when using USM on ZINC250K, and outperforms all other methods in QED except for BoN when using USM. We note that for the hardest reward ring count where the high-reward samples lie in extremely low density regions, \ourmethod{} consistently outperforms all other methods by a large margin.

\begin{figure}[htbp]
    \centering
    \includegraphics[width=\linewidth]{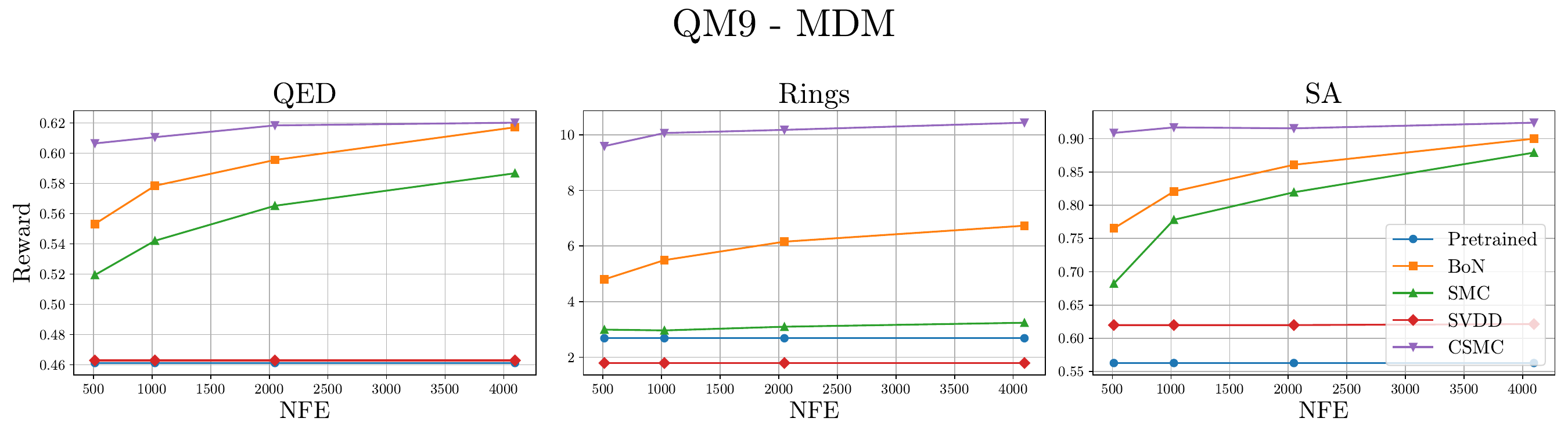}
    \includegraphics[width=\linewidth]{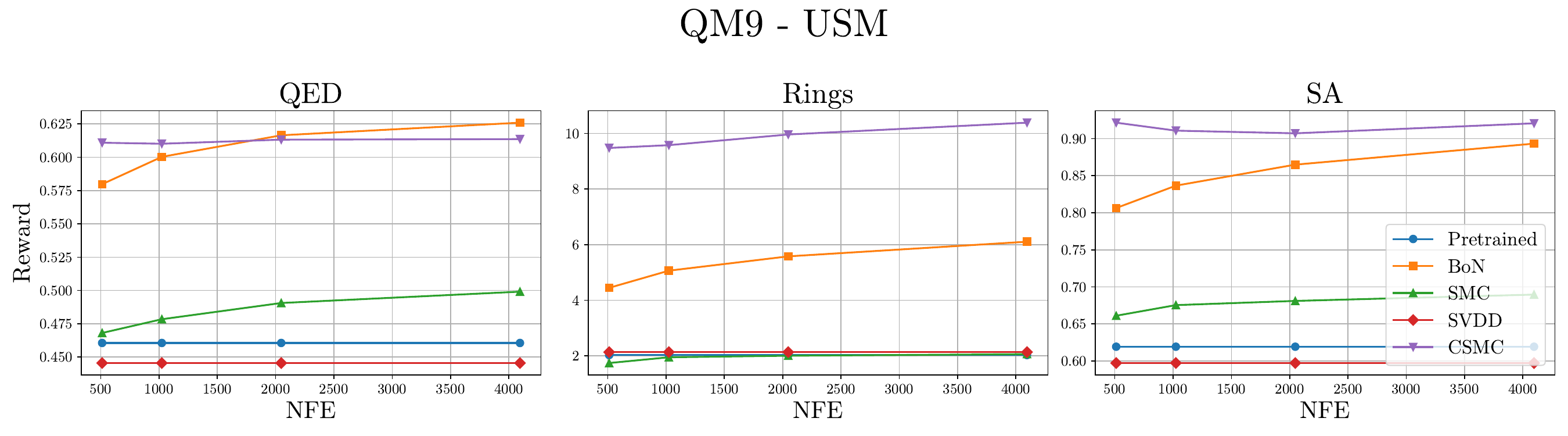}
    \includegraphics[width=\linewidth]{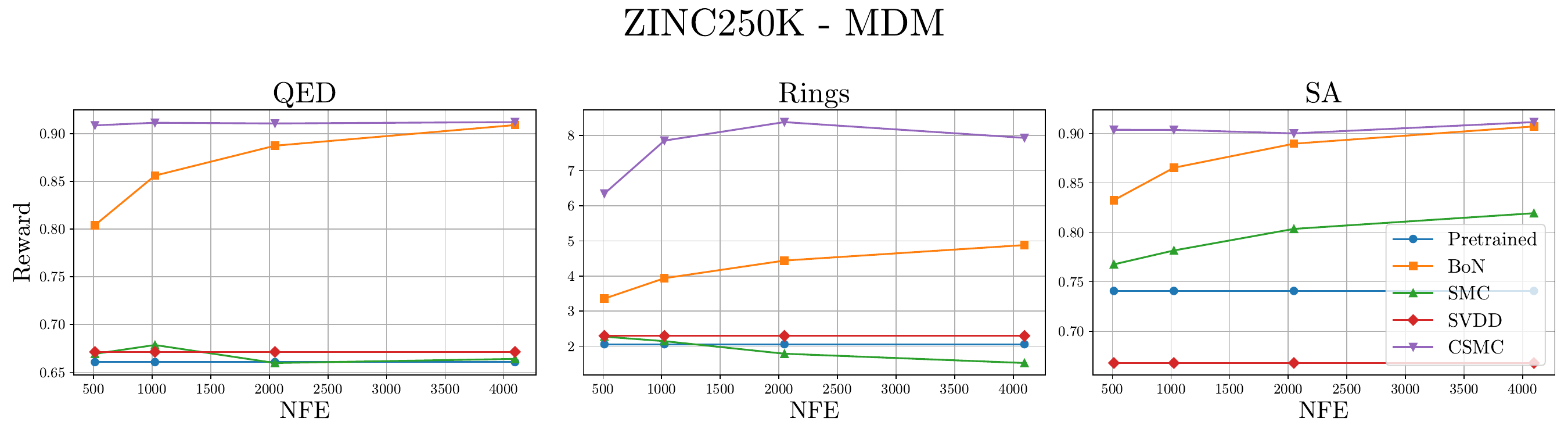}
    \includegraphics[width=\linewidth]{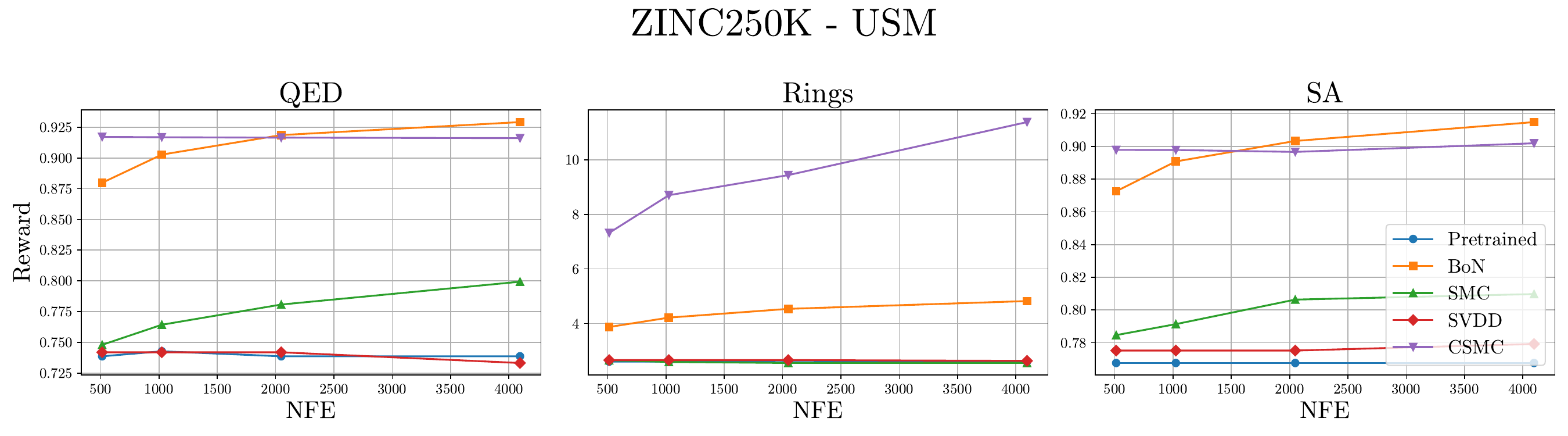}
    \includegraphics[width=0.48\linewidth]{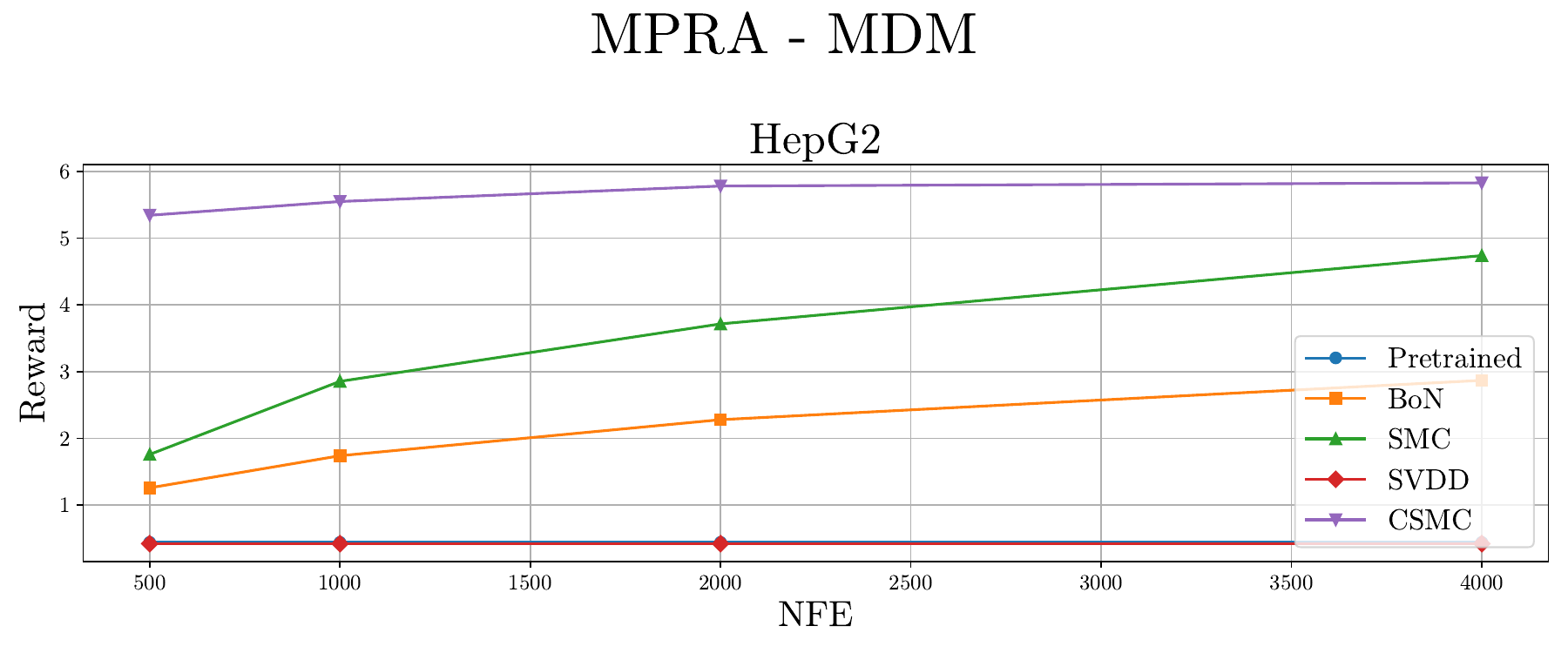}
    \includegraphics[width=0.48\linewidth]{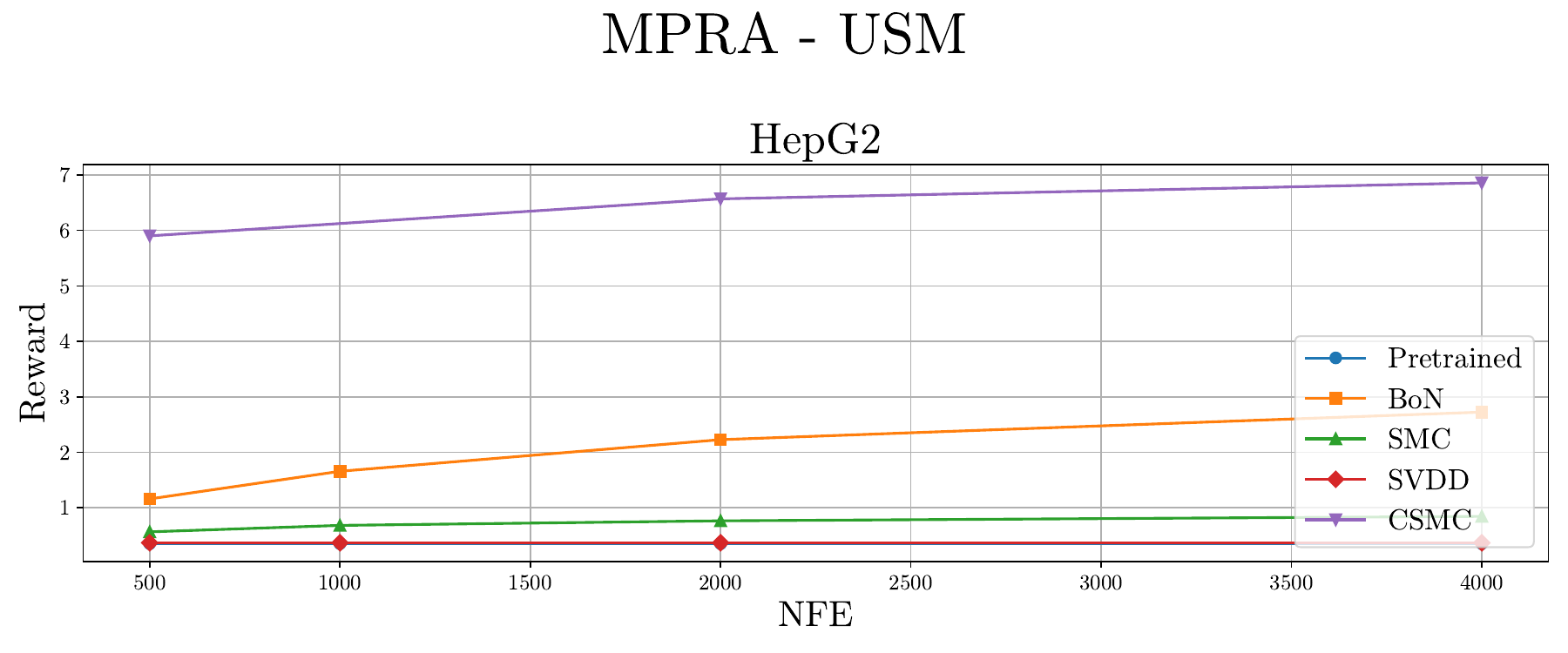}
    \caption{\textbf{Performance across various NFEs.} We run each method on molecule generation using NFEs $\in \{512, 1024, 2048, 4096\}$ for molecule tasks and NFE $\in \{ 500, 1000, 2000, 4000 \}$ for MPRA.}
    \label{fig:nfes}
\end{figure}

\subsection{Time Parameters $t_\text{lo}$ and $t_\text{hi}$}
\label{subapp:time-params}
We provide additional analysis on the time parameters $t_\text{lo}$ and $t_\text{hi}$. Larger $t_\text{lo}$ and $t_\text{hi}$ lead to increased exploration as the samples from the proposal distribution become more uncorrelated due to the larger noise scale. Smaller values lead to increased exploitation as samples are more correlated due to the smaller noise scale. 

To test the sensitivity of \ourmethod{} with respect to these time parameters, we fix $t_\text{lo}=0.2$ and vary $t_\text{hi} \in [0.2,0.7]$. We plot the reward and diversity for ZINC250K USM samples with respect to these parameters. We also include a plot where we fix $t_\text{hi}=0.8$ and vary $t_\text{lo} \in [0.3,0.8]$. The results are shown in Fig.~\ref{fig:t-ablation}. As shown in the figure, our method is not sensitive to the time parameters in general. However, For ring count, the reward decreases as $t_\text{lo}$ increases. Intuitively, as $t_\text{lo}$ increases, each \ourmethod{} \ step changes the current $x_0$ sample significantly, resulting in greater exploration but less exploitation. For simpler rewards such as QED and SA, this does not significantly impact the average reward. However, for ring count where high-reward samples lie in extremely low density regions of the pretrained model's learned distribution (refer to Fig.~\ref{fig:reward-distribution}), it is necessary to choose smaller times to exploit and explore around the current $x_0$.

\begin{figure}[htbp]
    \centering
    \includegraphics[width=\linewidth]{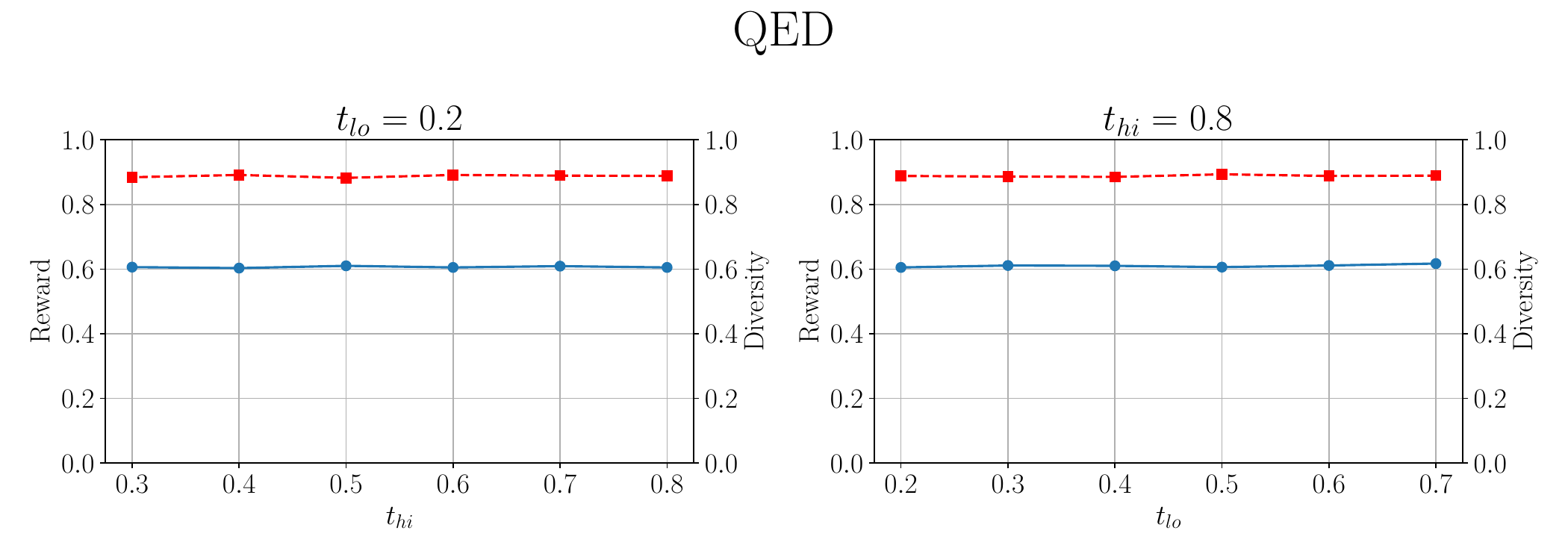}
    \includegraphics[width=\linewidth]{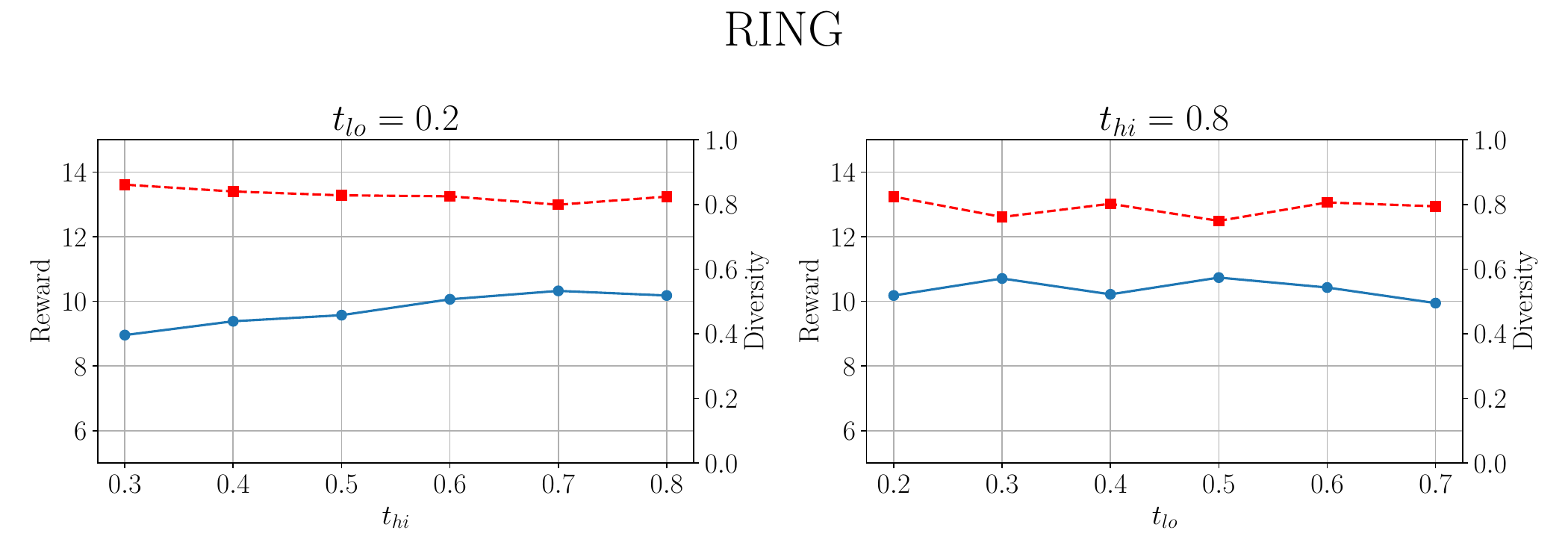}
    \includegraphics[width=\linewidth]{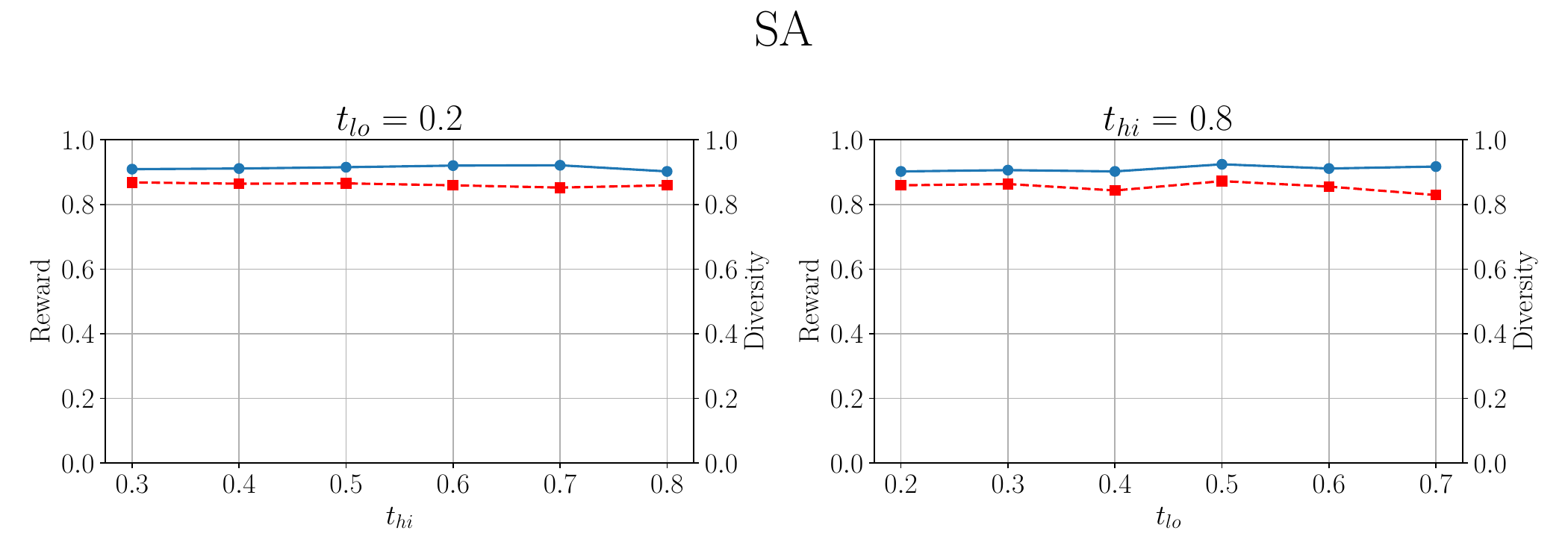}
    \caption{\textbf{Analysis of time parameters.} We plot the reward and diversity for ZINC250K USM samples with respect to (1) fixing $t_\text{lo}=0.2$ and vary $t_\text{hi} \in [0.3,0.8]$, and (2) fixing $t_\text{hi}=0.8$ and vary $t_\text{lo} \in [0.2,0.7]$.}
    \label{fig:t-ablation}
\end{figure}
\section{Qualitative Results}
\label{app:qualitatives}
 Qualitative results are shown in Fig.~\ref{fig:qualitative-mol}.

\begin{figure}[h!]

\centering
\setlength{\tabcolsep}{0.08em}
\renewcommand{\arraystretch}{1.0}
{\small
\begin{tabular}{@{}
  >{\centering\arraybackslash}m{0.14\linewidth} |
  >{\centering\arraybackslash}m{0.42\linewidth} |
  >{\centering\arraybackslash}m{0.42\linewidth}
@{}}
\toprule
& \textbf{Pretrained} & \textbf{\ourmethod} \\
\midrule

\multirow{3}{*}{\textbf{QM9}} &
\begin{tabular}{@{}c c c@{}}
  \makebox[0.13\textwidth]{QED: 0.378} &
  \makebox[0.13\textwidth]{Rings: 1} &
  \makebox[0.13\textwidth]{SA: 0.501} \\
  \includegraphics[width=0.13\textwidth]{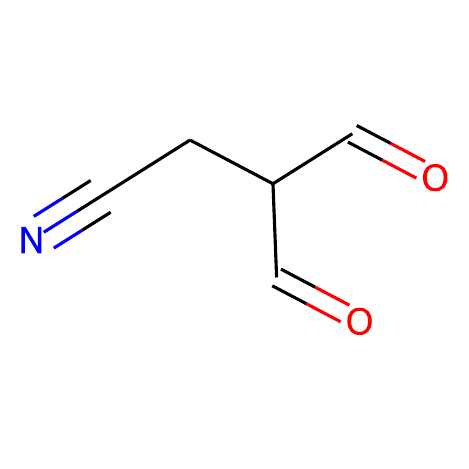} &
  \includegraphics[width=0.13\textwidth]{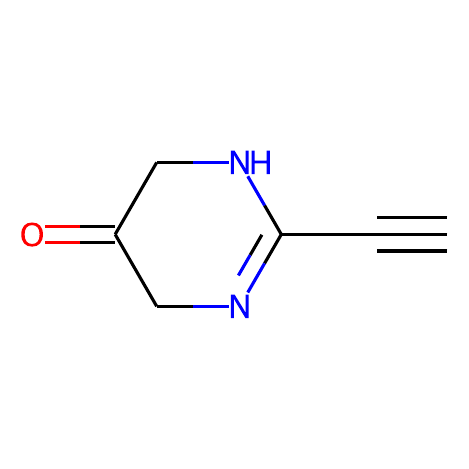} &
  \includegraphics[width=0.13\textwidth]{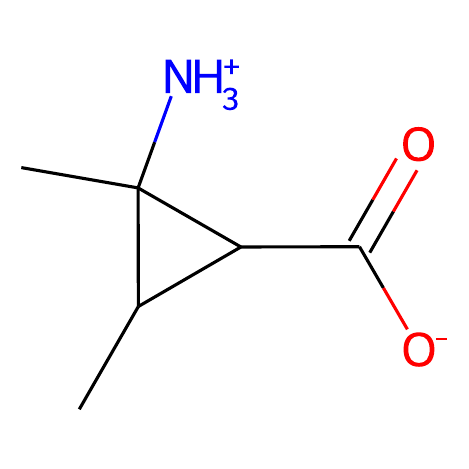} \\
\end{tabular}
&
\begin{tabular}{@{}c c c@{}}
  \makebox[0.13\textwidth]{QED: 0.578} &
  \makebox[0.13\textwidth]{Rings: 9} &
  \makebox[0.13\textwidth]{SA: 0.920} \\
  \includegraphics[width=0.13\textwidth]{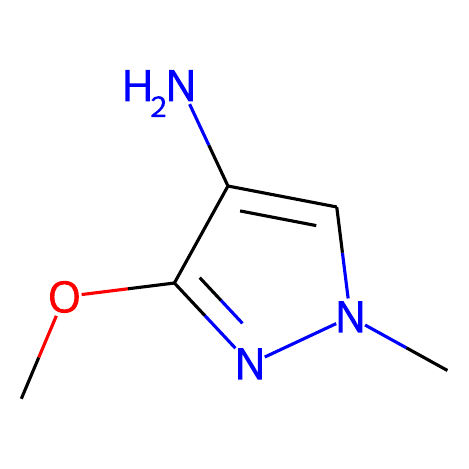} &
  \includegraphics[width=0.13\textwidth]{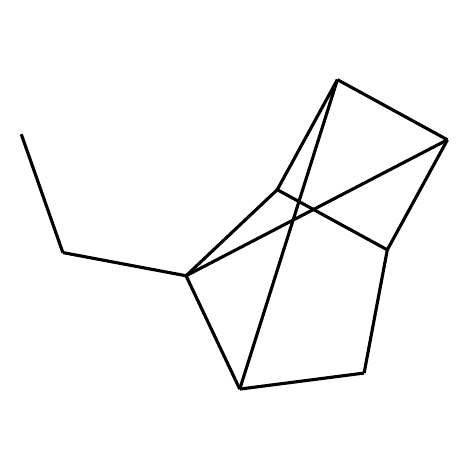} &
  \includegraphics[width=0.13\textwidth]{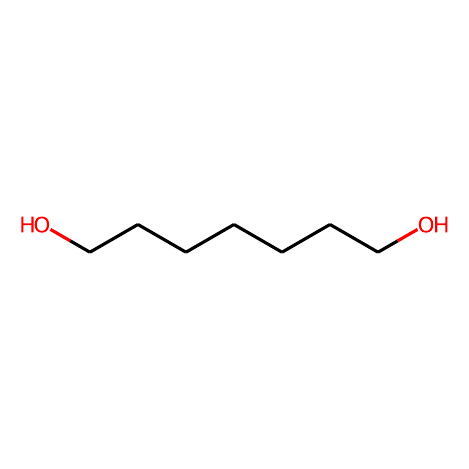} \\
\end{tabular}
\\
\midrule

\multirow{3}{*}{\textbf{ZINC250K}} &
\begin{tabular}{@{}c c c@{}}
  \makebox[0.13\textwidth]{QED: 0.610} &
  \makebox[0.13\textwidth]{Rings: 2} &
  \makebox[0.13\textwidth]{SA: 0.802} \\
  \includegraphics[width=0.13\textwidth]{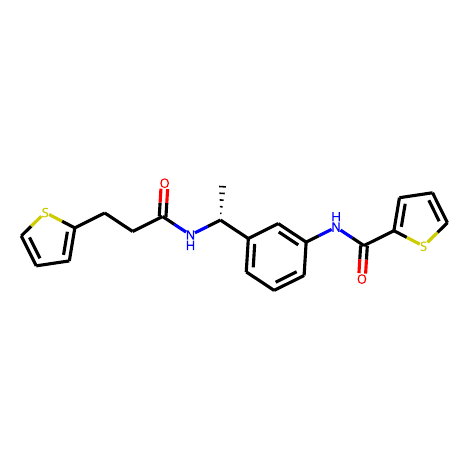} &
  \includegraphics[width=0.13\textwidth]{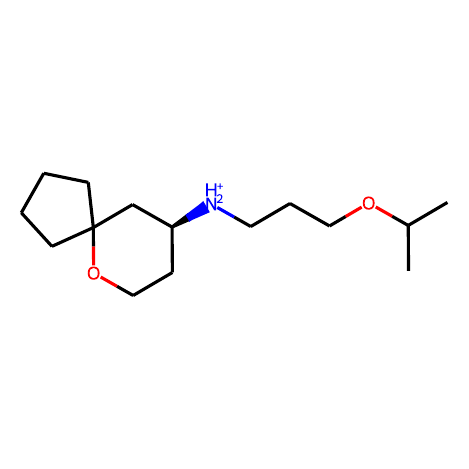} &
  \includegraphics[width=0.13\textwidth]{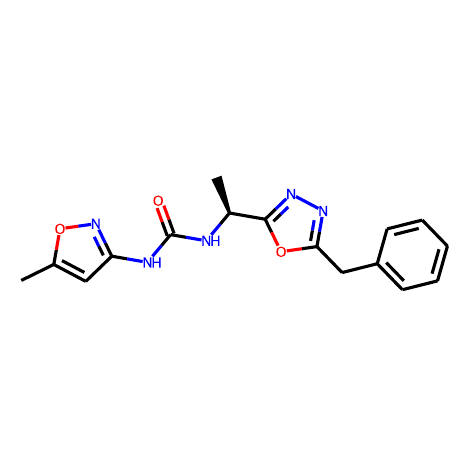} \\
\end{tabular}
&
\begin{tabular}{@{}c c c@{}}
  \makebox[0.13\textwidth]{QED: 0.934} &
  \makebox[0.13\textwidth]{Rings: 7} &
  \makebox[0.13\textwidth]{SA: 0.915} \\
  \includegraphics[width=0.13\textwidth]{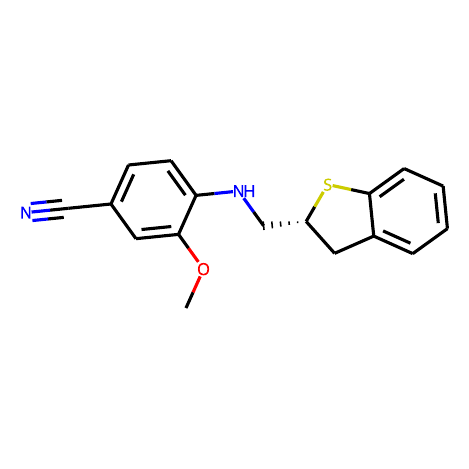} &
  \includegraphics[width=0.13\textwidth]{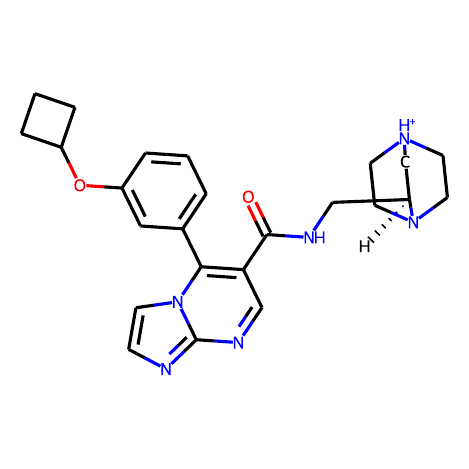} &
  \includegraphics[width=0.13\textwidth]{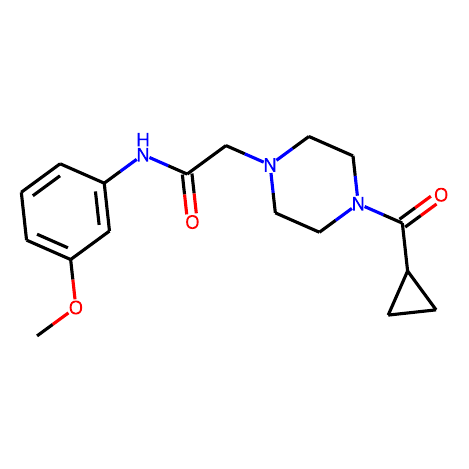} \\
\end{tabular}
\\
\bottomrule
\end{tabular}
}

\caption{\textbf{Qualitative results.} Molecules randomly sampled from discrete diffusion models pretrained on QM9~\citep{ramakrishnan2014qm9} and ZINC250K~\citep{irwin2012zinc}. \ourmethod \ generates molecules with high rewards as shown on the right.}
\vspace{-\baselineskip}
\label{fig:qualitative-mol}
\end{figure}


\end{document}